\newtheorem{theorem}{Theorem}
\newtheorem{corollary}[theorem]{Corollary}
\newtheorem{claim}[theorem]{Claim}
\newcommand{\sysname}{SENSE\xspace}
\definecolor{scolor}{rgb}{0,0,0}
\newlength{\figurewidthA} 
\newlength{\figurewidthB} 
\newlength{\figurewidthC} 
\newlength{\figurewidthE} 
\newlength{\figurewidthF} 
\newlength{\figurewidthG} 
\newlength{\figurewidthH} 
\newlength{\figurewidthI} 
\newlength{\figurewidthJ} 
\begin{document}


\title{SENSE: Semantically Enhanced Node Sequence Embedding}

\author{\IEEEauthorblockN{Swati Rallapalli}
\IEEEauthorblockA{\textit{IBM T.J. Watson Research} \\
Yorktown, NY, USA \\
srallapalli@us.ibm.com}
\and
\IEEEauthorblockN{Liang Ma}
\IEEEauthorblockA{\textit{IBM T.J. Watson Research} \\
Yorktown, NY, USA \\
maliang@us.ibm.com}
\and
\IEEEauthorblockN{Mudhakar	 Srivatsa}
\IEEEauthorblockA{\textit{IBM T.J. Watson Research} \\
Yorktown, NY, USA \\
msrivats@us.ibm.com}
\and
\IEEEauthorblockN{Ananthram Swami}
\IEEEauthorblockA{\textit{Army Research Lab} \\
Adelphi, MD, USA \\
ananthram.swami.civ@mail.mil}
\and
\ \ \ \ \ \ \ \ \ \ \ \ \ \ \ \IEEEauthorblockN{Heesung Kwon}
\IEEEauthorblockA{\ \ \ \ \ \ \ \ \ \ \ \ \ \ \ \textit{Army Research Lab} \\
\ \ \ \ \ \ \ \ \ \ \ \ \ \ \ Adelphi, MD, USA \\
\ \ \ \ \ \ \ \ \ \ \ \ \ \ \ heesung.kwon.civ@mail.mil}
\and
\IEEEauthorblockN{Graham Bent}
\IEEEauthorblockA{\textit{IBM Research} \\
Hursley, UK\\
gbent@uk.ibm.com}
\and
\IEEEauthorblockN{Christopher Simpkin}
\IEEEauthorblockA{\textit{Cardiff University} \\
Cardiff, UK \\
simpkinc1@cardiff.ac.uk}
}

\maketitle

\begin{abstract}

Effectively capturing graph node sequences in the form of vector embeddings is critical to many applications. We achieve this by (i) first learning vector embeddings of single graph nodes and (ii) then composing them to compactly represent node sequences. Specifically, we propose \sysname-S (Semantically Enhanced Node Sequence Embedding - for Single nodes), a skip-gram based novel embedding mechanism, for single graph nodes that co-learns graph structure as well as their textual descriptions. We demonstrate that \sysname-S vectors increase the accuracy of multi-label classification
tasks by up to $50\%$ and link-prediction tasks by up to $78\%$ under a variety of scenarios using real datasets. 
Based on \sysname-S, we next propose generic \sysname to compute composite vectors that represent a sequence of nodes, where preserving the node order is important. We prove that this approach is efficient in embedding node sequences, 
and our experiments on real data confirm its high accuracy in node order decoding.

\end{abstract}

%
%



\section{Introduction}
\label{sec:intro}
 
Accurately learning vector embeddings for a sequence of nodes in a graph is critical to many scenarios, e.g., a set of Web pages regarding one specific topic that are linked together. 
Such a task is challenging as: (i) the embeddings may have to capture graph structure along with any available textual descriptions of the nodes, and moreover, (ii) nodes of interest may be associated with a specific order.
For instance,
(i) for a set of Wikipedia pages w.r.t. a topic, there exists a recommended reading sequence; (ii) an
application may consist of a set of services/functions, which must be executed in a particular order (workflow composability); 
\textcolor{scolor}{(iii) in source routing \cite{Ma13ICDCS}, the sender of a packet on the Internet specifies the path that the packet takes through the network or (iv) the general representation of any path in a graph or a network, e.g., shortest path}. 
Node sequence embedding, thus, requires us to (i) learn embeddings for each individual node of the graph and (ii) compose them together to represent their sequences. 
To learn the right representation of individual nodes and also their sequences, we need to understand how these nodes are correlated with each other both functionally and structurally. 

A lot of work has only gone into learning single node embeddings (i.e., where node sequence length is $1$), as they are essential in feature representations for applications like multi-label classification or link prediction.
For instance, algorithms in \cite{deepwalk}, \cite{node2vec}, \cite{line} and others 
try to extract features purely from the underlying graph structure; algorithms in \cite{paragraph2vec}, \cite{word2vec} and others
learn vector representations of documents sharing a common vocabulary set.
However, many applications would potentially benefit from representations 
that are able to capture both textual descriptions and the underlying graph structure simultaneously. 
For example, 
(i) classification of nodes in a network not only depends on their inter-connections (i.e., graph structure), but also nodes' intrinsic properties (i.e., their textual descriptions); (ii) for product recommendations, 
if the product is new, it may not have many edges since not many users have interacted with it; however, using the textual descriptions along with the graph structure allows for efficient bootstrapping of the recommendation service. For general case of sequence lengths greater than $1$, despite the importance in applications like workflow composability described above, there is generally a lack of efficient solutions. Intuitively, we can concatenate or add all involved node vectors; however, such a  mechanism either takes too much space or loses the sequence information; thus unable to represent node sequences properly. 

We aim to learn node sequence embeddings by
first addressing the single node embedding problem, as a special case of node sequence embedding, by considering both the textual descriptions
and the graph structure. We seek to answer two questions: How should we combine these two objectives?
What framework should we use for feature learning? Works that jointly address these two questions either investigate them under different problem settings \cite{mpme,Xiao17AAAI}, under restricted learning models \cite{Le14ICDM}, ignore the word context within the document \cite{SNE, SNEA}, do not co-learn text and graph patterns \cite{gatedGNN} or only consider linear combinations of text and graph \cite{EP}; this is elaborated further in Section~\ref{sec:related}. 
In contrast, we propose a generic neural-network-based model called \sysname-S (Semantically Enhanced Node Sequence Embeddings - for Single nodes) for computing vector representations of nodes with additional semantic information in a graph. \sysname-S is built on the foundation of skip-gram models. However, \sysname-S is significantly different from classic skip-gram models in the following aspects: (i) For each word $\phi$ in the textual description of node $v$ in the given graph, neighboring words of $\phi$ within $v$'s textual description and neighboring nodes of $v$ within the graph are sampled at the same time. 
(ii) The text and graph inputs are both reflected in the output layer in the form of probabilities of co-occurrence (in graph or text).
(iii) Moreover, this joint optimization problem offers an opportunity to leverage the synergy between the graph and text inputs to ensure
faster convergence.
We evaluate the generated vectors on  (i) \cite{Wikispeedia} 
to show that our \sysname-S model improves multi-label classification accuracy by up to $50\%$ and (ii) Physics Citation dataset \cite{citation_data} to show that \sysname-S improves link prediction accuracy by up to 78\% over the
state-of-the-art.

Further, we propose \sysname for general feature representation of a \emph{sequence} of nodes. This problem is more challenging in that (i) besides the original objectives in \sysname-S, we now face another representation goal, i.e., sequence representation while preserving the node order; (ii) it is important to represent the sequence in a compact manner; 
and (iii) more importantly, given a sequence vector, we need to be able to decipher which functional nodes are involved and in what order.
To this end, we develop efficient
schemes to combine individual vectors into complex sequence vectors that address all of the above challenges. 
The key technique we use here is vector cyclic shifting, and \textit{we prove that the different shifted vectors are 
orthogonal with high probability}. 
This sequence embedding method is also evaluated on the Wikispeedia and Physics Citation datasets, and 
the accuracy of decoding a node sequence is shown to be close to $100\%$ when the vector dimension is large.

The rest of the paper is organized as follows: in Section~\ref{sec:related} we overview the most related papers, Section~\ref{sec:approach} details \sysname-S for computing single node embeddings that simultaneously capture text and graph structures. Section~\ref{sec:comp} details \sysname for representing a sequence of nodes. Section~\ref{sec:eval} evaluates \sysname and finally Section~\ref{sec:conc} concludes the paper.

\section{Related Work}
\label{sec:related}

We categorize the most related works as follows:

\textbf{Learning vector representation from text:}
Vector representation of words \cite{wordspace} has been a long standing research topic. 
It has received significant attention in the recent times due to the advances in deep neural networks \cite{nnlm,nnlmplus,word2vec}.
In particular,
these neural-network-based schemes outperform $n$-gram-based techniques~\cite{ngram1,ngram2}
significantly as they are able to learn the similarities
between words.  
Furthermore, paragraph2vec~\cite{paragraph2vec} extends the well-established word2vec~\cite{word2vec} to 
learn representations of chunks of text.  

\textbf{Learning vector representation from graphs:}
Lot of research has gone into learning graph representations by translating the network/graph into a set of words or documents~\cite{deepwalk,Pimentel17ICLR,Ribeiro17KDD,Wang16KDD, Liu16IJCAI,line,node2vec}.
Generic models incorporating both edge weight and direction information for graph embeddings are proposed in \cite{Zhou17AAAI}, \cite{line} and \cite{node2vec}. Specifically, 
node2vec~\cite{node2vec} advances the state-of-the-art in this area by designing flexible node
sampling methodologies to allow feature vectors to exhibit different properties.
Subgraph2vec~\cite{subgraph2vec} extends these schemes to learn vector representations of subgraphs. \cite{decentralized_wf} proposes techniques to represent graph sequences under the assumption that each node is represented by a random binary vector.

\textbf{Learning graph representation with auxiliary information:}
Broadly speaking, our work falls into the category of node embedding in graphs with auxiliary information. \cite{Guo17TKDE}, \cite{Xie16IJCAI}, \cite{LANE} and others address the case where nodes are associated with labels.
\cite{Niepert16ICML} studies graph embedding when node/edge attributes are continuous. \cite{mpme} investigates phrase ambiguity resolution via leveraging hyperlinks.
However, all these works operate under information or network constraints. 
On the other hand, \cite{Xiao17AAAI}, \cite{Yao17AAAI} and \cite{Wang16IJCAI} explore embedding strategies in the context of knowledge graphs, where the main goal is to maintain the entity relationships specified by semantic edges. In contrast, we consider a simpler network setting where only nodes are associated with semantic information. 
EP~\cite{EP} and GraphSAGE~\cite{GraphSAGE} learn embeddings for structured graph data. 
However, the textual similarities are only captured by linear combinations.
Planetoid~\cite{planetoid} computes node embeddings under semi-supervised settings; 
metapath2vec~\cite{metapath2vec} learns embeddings for heterogeneous networks (node can be author or paper); 
and Graph-Neural-Network-based embeddings are explored in \cite{semisuper_GCN} and \cite{gatedGNN}.
However, these papers do not explicitly learn graph structure and text patterns simultaneously,
and thus are complementary to \sysname. SNE~\cite{SNE}, SNEA~\cite{SNEA}, 
\textcolor{scolor}{TADW~\cite{tadw}, HSCA~\cite{hsca}, AANE~\cite{aane}, ANRL~\cite{anrl}} 
and PLANE~\cite{Le14ICDM} are more related to our work. However, unlike \sysname, these do not consider the relative context of the words w.r.t. the document. 
Furthermore, the objective of PLANE is to maximize the likelihood that neighboring nodes have similar embeddings, which is not always the case in practice because neighboring nodes may be semantically different; more critically, it relies on strong assumptions of statistical distributions of words and edges in the network. In this regard, we propose a generic embedding scheme that jointly considers network topology as well as the nodes'  semantic information.

\section{\sysname-S: \sysname for Single Node Embeddings}
\label{sec:approach}

To embed a general node sequence, we first consider a special case where each node sequence contains only one node. Such single node embedding is referred to as \sysname-S, which jointly learns node representations along with textual descriptions in graphs. 

\subsection{\sysname-S Objective}
\label{subsec:obj}

Let $\mathcal{G}=(V,E)$ denote a given directed or undirected graph, where $V$ is the set of nodes and $E$
 the set of edges.  
Each node in $V$ is associated with a text description. We aim to embed each node $v$ in $V$ into a feature vector 
that captures both graphical (neighboring node inter-connections) and textual (semantic meaning of $v$) properties. 
Specifically, let $\phi$ denote a word in the text description of node $v$. 
Suppose we obtain a set of neighboring nodes of $v$ in graph $\mathcal{G}$ via a specific node sampling strategy, e.g., biased random walk \cite{node2vec}, and a set of neighboring words of $\phi$ in the text description of $v$ by a sliding window over consecutive words \cite{word2vec}. We then define $N_{\mathcal{G}}(v)$ as a probabilistic event of observing the set of neighboring nodes of $v$ in $\mathcal{G}$ (under the chosen model) and $N_T(\phi|v)$ as an event of observing the set of neighboring words of $\phi$ in the text description of $v$. Let $f:v\rightarrow \mathbb R^d$ ($v\in V$) be the embedding function that maps each node $v$ in $V$ into a $d$-dimensional vector. Our goal is to find function $f$ that maximizes:


\begin{equation}
\label{eq:obj}
\max_f\ \ \sum_{v\in V}\sum_{\phi\text{ in }v}\log \Pr[N_{\mathcal{G}}(v)N_T(\phi|v)|\phi,f(v)].
\end{equation}

Since events $N_{\mathcal{G}}(v)$ and $N_T(\phi|v)$ are independent, (\ref{eq:obj}) can be rewritten as:

\begin{equation}
\label{eq:obj2}
\begin{split}
\max_f\ &\sum_{v\in V}\sum_{\phi\text{ in }v}\big(\log \Pr[N_{\mathcal{G}}(v)|\phi,f(v)]+\\
&\log \Pr[N_T(\phi|v)|\phi,f(v)]\big)\\
=&\sum_{v\in V}w_v\log \Pr[N_{\mathcal{G}}(v)|f(v)] + \\
&\sum_{v\in V}\sum_{\phi\text{ in }v}\log \Pr[N_T(\phi|v)|\phi,f(v)],
\end{split}
\end{equation}
where $w_v$ is the number of words\footnote{Some  uninformative words can be skipped as in skip-gram models \cite{deepwalk,node2vec} for achieving better performance.} 
in the  description of $v$. 
Given a word in a node description, (\ref{eq:obj2}) jointly captures the node neighborhood in the graph and the word neighborhood in text.
Below, we explain how the problem can be interpreted in a simplified way. Let 
\begin{equation}
\label{eq:twoLoss}
\begin{split}
F_{\mathcal{G}}&:=\sum_{v\in V}\log\Pr[N_{\mathcal{G}}(v)|f(v)]\text{ and}\\ F_T&:=\sum_{v\in V}\sum_{\phi\text{ in }v}\log \Pr[N_T(\phi|v)|\phi,f(v)].
\end{split}
\end{equation} 
$F_{\mathcal{G}}$ is exactly the same as the optimization objective in node2vec \cite{node2vec} for graph embedding, and $F_T$ is also similar to the optimization objective in paragraph2vec \cite{paragraph2vec} for text embedding except that we use one word (rather than multiple words as in \cite{paragraph2vec}) to predict its neighboring words within the same document, i.e., the description of node $v$. Therefore, (\ref{eq:obj2}) takes both graphical and textual features into account for node embeddings. Under this optimization objective, we next discuss our \sysname-S model and how $\Pr[N_{\mathcal{G}}(v)|f(v)]$ and $\Pr[N_T(\phi|v)|\phi,f(v)]$ are computed in \sysname-S.

\subsection{\sysname-S Architecture}
Due to the similarity between $F_{\mathcal{G}}+F_T$ and (\ref{eq:obj2}), 
we build \sysname-S on the foundation of the skip-gram model~\cite{word2vec} originally proposed to learn vector representation of words.
In particular, Node2Vec~\cite{node2vec} and DeepWalk~\cite{deepwalk} leverage this skip-gram model to learn vector representation of nodes
in a graph by performing biased random walks on the graph and treating each walk as equivalent to a sentence, aiming to predict a neighboring \textit{node} given the current node in a graph. On the other hand, Paragraph Vector~\cite{paragraph2vec}  extends skip-gram models to learn embeddings of various chunks of text, e.g., sentences, paragraphs or entire documents, via sampling the neighboring words over a sliding window within the text.
Motivated by the effectiveness of these models, we build two \sysname-S models, \sysname-S (add) and \sysname-S (concat), as detailed below. 

\begin{figure*}
\vspace{+2em}
\centering
\begin{tabular}{ccc}
\begin{minipage}{200pt}
{\includegraphics[width=3in]{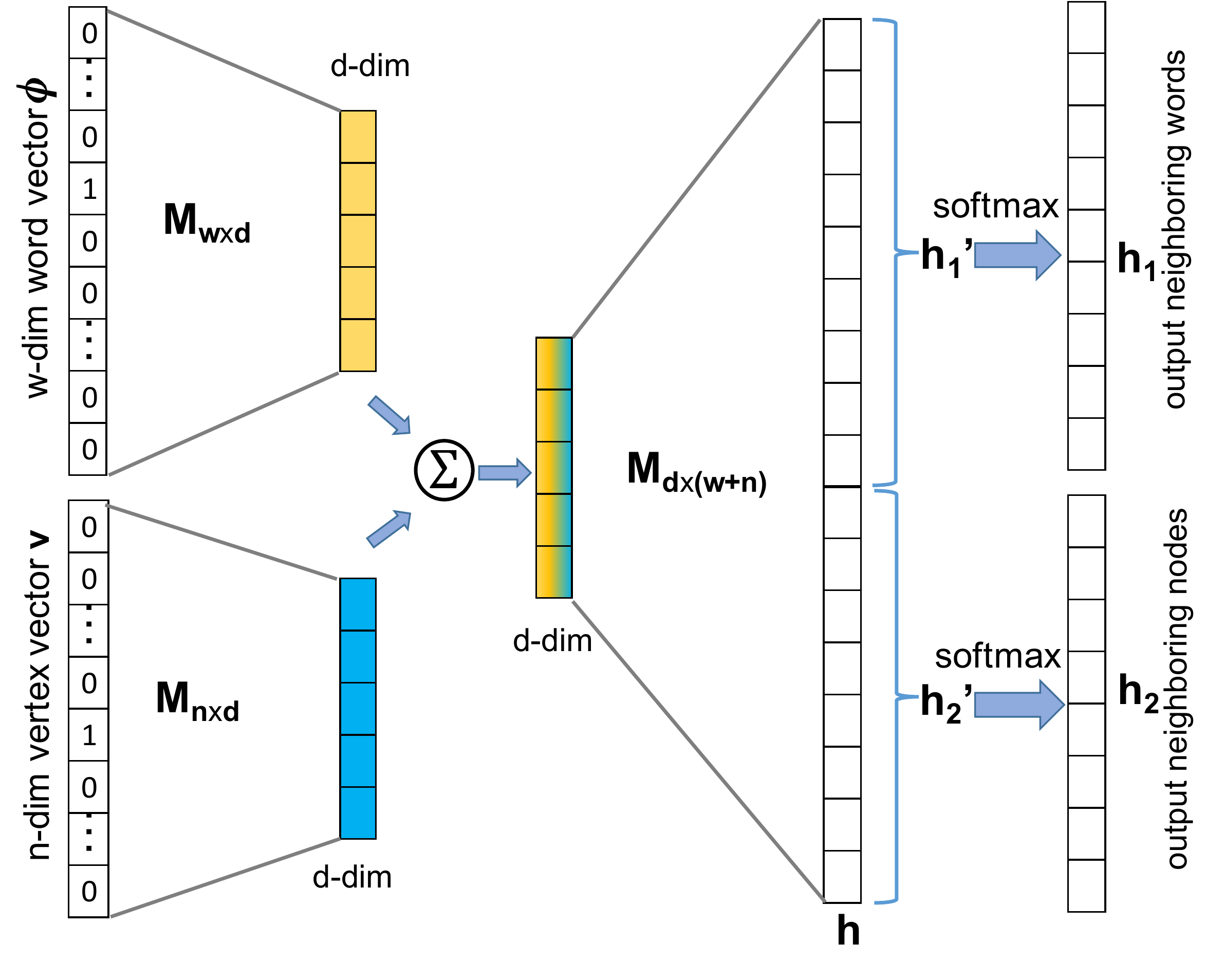}}
\caption{\sysname-S (add) architecture (dim: dimension)}
\label{fig:modelAdd}
\end{minipage}
&\hspace{6em}
\begin{minipage}{200pt}
{\includegraphics[width=3in]{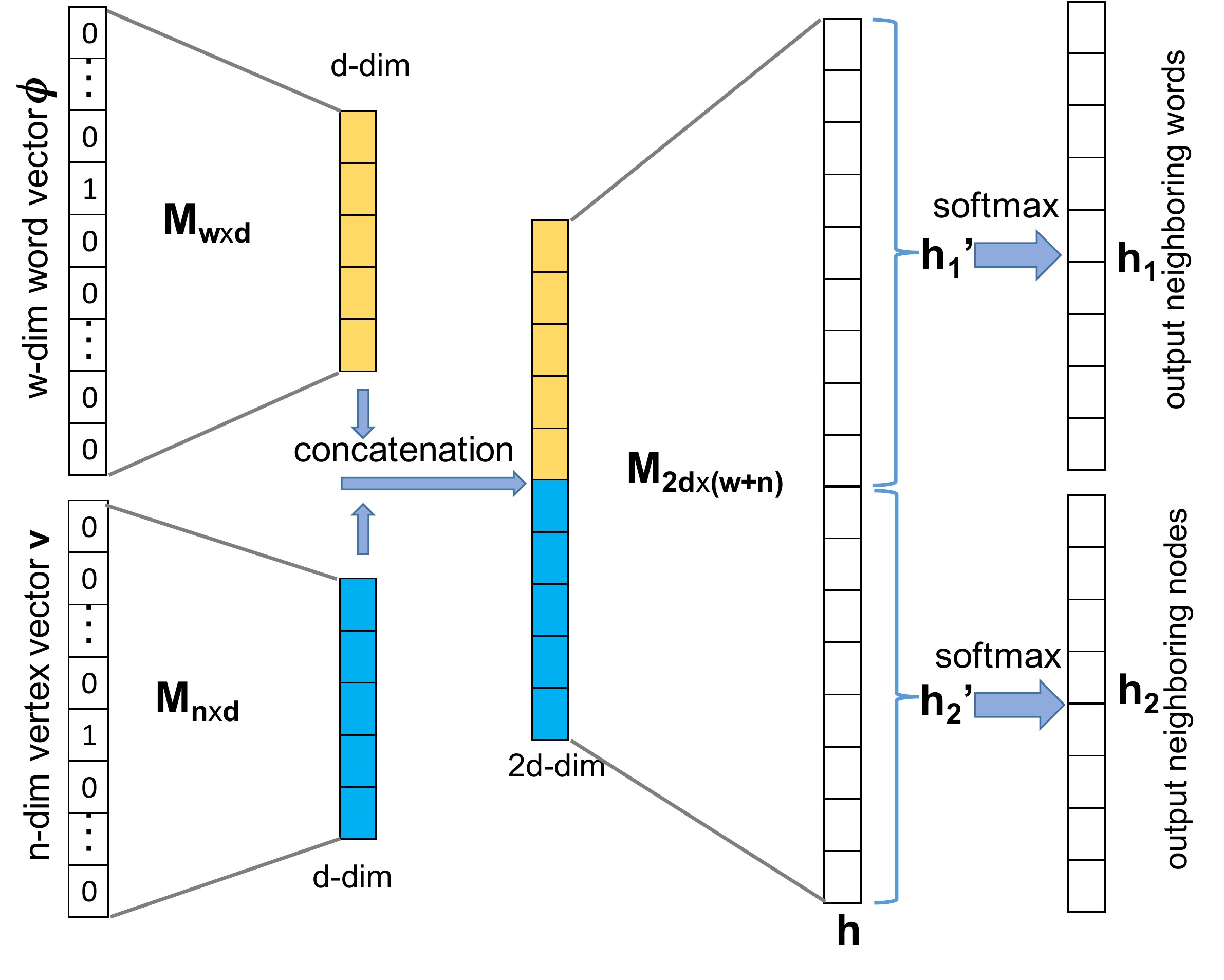}}
\caption{\sysname-S (concat) architecture (dim: dimension)}
\label{fig:modelConcat}
\end{minipage}
\end{tabular}
\end{figure*}


\subsubsection{\sysname-S (add)} Let $w$ denote the number of words (some uninformative words are skipped) in text descriptions across \emph{all} nodes in the given graph $\mathcal{G}$, i.e., $w$ is the size of the entire vocabulary, and $n$ the number of nodes in $\mathcal{G}$. Then our model is formulated as a neural network, as shown in Figure~\ref{fig:modelAdd}. In this model, each input is a two-tuple $(\phi, v)$, i.e., word $\phi$ is picked from the description of node $v$. Then $(\phi, v)$ is mapped to two one-hot vectors, $w$-dimensional word vector $\bm \phi$ and $n$-dimensional vertex vector
\footnote{One-hot \emph{vertex vectors} are the input to \sysname, while \emph{node vectors} are our semantically augmented node embeddings.} 
$\mathbf v$, where only the entries corresponding to $\phi$ and $v$ are set to $1$ and others are set to $0$.
Then as in typical fully connected neural network architectures, in the first layer, $\bm\phi^T$ and $\mathbf v^T$ are multiplied (implemented as look-up for efficiency) by two matrices $\mathbf M_{w\times d}$ and $\mathbf M_{n\times d}$, respectively. The resulting vectors are then added together and multiplied by another matrix $\mathbf M_{d\times(w+n)}$ in the second layer, i.e., a ($w+n$)-dimensional vector 
\begin{equation}
\label{eq:SANEadd}
\mathbf h^T:=(\bm\phi^T\mathbf M_{w\times d}+\mathbf v^T\mathbf M_{n\times d})\mathbf M_{d\times(w+n)}
\end{equation}
is obtained. Finally, unlike typical skip-gram models where all entries in the output vectors of the second layer are processed by a softmax function, we decompose vector $\mathbf h$ into two sub-vectors, $\mathbf h'_1$ consisting of the first $w$ entries and $\mathbf h'_2$ consisting of the rest. Then $\mathbf h'_1$ and $\mathbf h'_2$ are fed to separate softmax functions, yielding $\mathbf h_1$ and $\mathbf h_2$, respectively (see Figure~\ref{fig:modelAdd}). The reason for this decomposition operation is that we use $\mathbf h_1$ to represent the probability vector of neighboring words of $\phi$ in the description of node $v$, and $\mathbf h_2$ to represent the probability vector of neighboring nodes of $v$ in the graph. Using this neural network architecture, we aim to learn the values of all entries in the matrices $\mathbf M_{w\times d}$, $\mathbf M_{n\times d}$ and $\mathbf M_{d\times(w+n)}$ such that the entries (i.e., probabilities) in $\mathbf h_1$ and $\mathbf h_2$ corresponding to neighboring words of $\phi$ (within the text description of node $v$) and neighboring nodes of $v$ (within the given graph) are maximized; see the objective in (\ref{eq:obj2}). 
Note that as discussed before, the set of neighboring words and the set of neighboring nodes are obtained via a fixed-size sliding window as proposed in word2vec \cite{word2vec} and a fixed-length random walk as proposed in node2vec \cite{node2vec}, respectively. The reason to use the random walk is that it is sufficiently flexible to incorporate different graph properties like heterophily or structural equivalence.
Under this architecture, 
we use matrix $\mathbf M_{n\times d}$ as our final semantically augmented node embeddings, i.e., each row in $\mathbf M_{n\times d}$ corresponds to a node embedding vector.

Note that a unique property of \sysname-S is that it is a conjoined model where the textual and graphical inputs
both contribute to the learning of node embeddings.
Moreover, 
there is an \emph{add} operation for combining these features, and thus this model is called \sysname-S (add). This is in contrast to the \emph{concatenation} operation that is used in a different implementation of \sysname-S; see below. 

\subsubsection{\sysname-S (concat)} \sysname-S (concat) model is illustrated in Figure~\ref{fig:modelConcat}. Clearly, \sysname-S (concat) is quite similar to \sysname-S (add), except that (i) the resulting vectors generated by the first layer are \emph{concatenated}, 
and (ii) the matrix dimension in the second layer becomes $(2d)\times(w+n)$. 
In this model, the output vector after the second layer is $\mathbf h^T=(\bm\phi^T\mathbf M_{w\times d},\mathbf v^T\mathbf M_{n\times d})\mathbf M_{(2d)\times(w+n)}$, where $(\mathbf x^T,\mathbf y^T)$ denotes the concatenation of vectors $\mathbf x$ and $\mathbf y$. 
Then, again, matrix $\mathbf M_{n\times d}$ is employed as the final output for node embeddings under semantically augmented information. In Section~\ref{sec:eval}, we will compare \sysname (add) and \sysname (concat) against other baseline solutions to show the accuracy of our model.



\section{\sysname}
\label{sec:comp} 

In \sysname-S, our focus has been on computing semantically enhanced embeddings for individual nodes.  
In this section, we propose the general \sysname algorithm to represent any set of nodes following a specified order using the node vectors generated by \sysname-S, called \emph{node sequence embedding}. 

Given the original graph $\mathcal{G}=(V,E)$, let $S=v_1\rightarrow v_2\rightarrow\cdots\rightarrow v_q$ be a node sequence constructed with $v_i\in V$($i=1,2,\ldots,q$). Note that $S$ may contain \emph{repeated} nodes, e.g., some functions need to be executed more than once in one application. 
Intuitively, node sequence $S$ can be represented by a $d\times q$ matrix with column $i$ being the vector representation of node $v_i$. However, such a representation is 
costly in space. 
Alternatively, representing $S$ by the vector sum $\sum_{i=1}^{q} \mathbf v_i$ ($\mathbf v_i$ corresponds to $v_i$) results in missing the node order information. Hence, in this section, we seek to find a low-dimensional vector representation such that (i) node properties in the original network $\mathcal{G}$ and (ii) the node order in $S$ are both preserved. To this end, we propose an efficient node sequence embedding method utilizing node vectors generated for the original network $\mathcal{G}$. The basic idea behind our method is that (i) we first change the form of the node vectors in $S$, (ii) then we add these transformed vectors to represent node sequence $S$. The main advantage of such arithmetic operations is that all required features in a node sequence can be easily encoded and decoded in a vector of the same dimension as the node vectors.

\subsection{Node Sequence Embedding Mechanism}
\label{subsec:node sequenceEmbed}
We now discuss our node sequence embedding method based on node vectors generated by \sysname. 
In the rest of this section, all node vectors are \emph{unit vectors} obtained by normalizing the node vectors generated by \sysname-S; this property is critical in our node sequence embedding method (see Section~\ref{subsec:theoFoundation}).

\subsubsection{Node Sequence Vector Construction:}
Given a node sequence $S=v_1\rightarrow v_2\rightarrow\cdots\rightarrow v_q$ following in order from node $v_1$ to node $v_q$, let $\mathbf v_i$ be the unit node vector of node $v_i$. We first perform positional encoding, via cyclic shift function. 
Specifically, given vector $\mathbf v$ of dimension-$d$ and non-negative integer $m$, we define $\mathbf v^{(m)}$ as a vector obtained via cyclically shifting elements in $\mathbf v$ by $m'=(m\mod d)$ positions. 
Mathematically, let $\mathbf S$ be a $d\times d$ binary matrix, where entries $S_{i,i+1}$ ($i=1,2,\ldots,d-1$) and $S_{d,1}$ are $1$ and the rest are $0$. Then
\begin{equation}
\label{eq:cyclic_shift}
\begin{aligned}
\mathbf v^{(m)}:&=\mathbf S^m \mathbf v\\
&=\left\{\begin{aligned}
&\mathbf v, \ \ \ \ \ \ \ \ \ \ \ \ \ \ \ \ \ \ \ \ \ \ \ \ \ \  \ \text{if }(m\mod d)=0,\\
&[v_{d-m'+1},\ldots,v_{d},v_{1},\ldots,v_{d-m'}]^T, \ \text{otherwise},
\end{aligned} \right.
\end{aligned}
\end{equation}
where $v_j$ is the $j$-th element in $\mathbf v$. 
Therefore, $\mathbf v_i^{(i-1)}$ represents a vector resulted from cyclically shifting $(i-1\mod d)$ positions of the node vector at position $i$ in $S$.
Let $\mathbf S$ denote the vector representation of node sequence $S$. Suppose $q\ll d$, Then we embed $S$ as 
\begin{equation}
\label{eq:finalW}
\mathbf S = \sum_{i=1}^q  \mathbf v_i^{(i-1)}.
\end{equation} 

Note that for repeated nodes in $S$, they are cyclically shifted by different positions depending on the specific order in the node sequence. In $\mathbf S$, by imposing the restriction $q\ll d$, we ensure that wraparounds do not occur while shifting the vector, because it may lead to ambiguity of node positions within a node sequence. 
Simple as this embedding approach may seem, we show that it exhibits the following advantages. 
First, the dimension of  
node sequence vectors remains the same as that of the original node vectors. 
Second, given a node sequence vector $\mathbf S$, 
we are able to infer which nodes are involved in $\mathbf S$ and their exact positions in $\mathbf S$ as explained below.
 
\subsubsection{Node Sequence Vector Decoding:}
The method for determining which nodes are included (and in which order) in a given node sequence vector is referred to as \emph{node sequence vector decoding}. 
The basic idea in node sequence vector decoding is rooted in Corollary~\ref{cor:cond} (Section~\ref{subsec:theoFoundation}), which implies that using cyclic shifting, we essentially enable a preferable property that 
$\mathbf v_i^{(i-1)}$ and $\mathbf v_j^{(j-1)}$ 
with $i\neq j$ are almost orthogonal, i.e., 
$\mathbf v_i^{(i-1)} \cdot\mathbf v_j^{(j-1)} \approx 0$, 
even if $\mathbf v_i$ and $\mathbf v_j$ are related in the original graph ($\mathbf v_i$ and $\mathbf v_j$ may even correspond to the same node). By this property, we make the following claim, assuming that all node vectors are \emph{unit vectors}. 

\begin{claim}
\label{cl:decoding}
Given a node sequence vector $\mathbf S$, node $v$, whose node vector is $\mathbf v$, is at the $k$-th position of this node sequence if the inner product $\mathbf S\cdot\mathbf v^{(k-1)}\approx 1$.
\end{claim}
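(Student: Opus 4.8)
The plan is to compute the inner product $\mathbf S\cdot\mathbf v^{(k-1)}$ directly from the definition of the sequence embedding in \eqref{eq:finalW} and to isolate the single term responsible for the value being close to $1$. Substituting $\mathbf S=\sum_{i=1}^q\mathbf v_i^{(i-1)}$ gives
\begin{equation}
\label{eq:decode_expand}
\mathbf S\cdot\mathbf v^{(k-1)}=\sum_{i=1}^q\mathbf v_i^{(i-1)}\cdot\mathbf v^{(k-1)},
\end{equation}
so the claim reduces to understanding each term $\mathbf v_i^{(i-1)}\cdot\mathbf v^{(k-1)}$. The crucial structural fact I would invoke is that every cyclic-shift matrix $\mathbf S^m$ of \eqref{eq:cyclic_shift} is a permutation matrix and hence orthogonal, so it preserves inner products: $\mathbf v_i^{(i-1)}\cdot\mathbf v^{(k-1)}=(\mathbf S^{i-1}\mathbf v_i)\cdot(\mathbf S^{k-1}\mathbf v)$.

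Next I would split \eqref{eq:decode_expand} into the diagonal term $i=k$ and the off-diagonal terms. For $i=k$, orthogonality of $\mathbf S^{k-1}$ collapses the shifts and leaves $\mathbf v_k\cdot\mathbf v$ \emph{exactly}. Because all node vectors are unit vectors, $\mathbf v_k\cdot\mathbf v\le 1$ with equality precisely when $\mathbf v=\mathbf v_k$, i.e.\ when $v$ is the node at position $k$. For every $i\neq k$ the two factors carry distinct shift amounts $i-1\not\equiv k-1 \pmod d$ (guaranteed by the restriction $q\ll d$, which also prevents wraparound), so Corollary~\ref{cor:cond} makes them almost orthogonal, $\mathbf v_i^{(i-1)}\cdot\mathbf v^{(k-1)}\approx 0$, even when $v_i$ and $v$ are related or identical in $\mathcal G$. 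Summing, $\mathbf S\cdot\mathbf v^{(k-1)}\approx\mathbf v_k\cdot\mathbf v$ (where the diagonal term is read as $0$ when $k>q$). Hence the inner product is $\approx 1$ exactly when $\mathbf v\approx\mathbf v_k$, which identifies $v$ as the node sitting at position $k$, establishing the claim.

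The main obstacle is controlling the \emph{accumulated} error. Each cross term is only approximately zero — Corollary~\ref{cor:cond} bounds it with high probability, not exactly — and there are up to $q-1$ of them, so I would need to show that the total deviation stays safely below the gap between $1$ and $\max_{v\neq v_k}\mathbf v\cdot\mathbf v_k$ that separates the correct node from every other candidate. This is precisely where the regime $q\ll d$ is essential: it keeps the number of interfering terms small relative to the dimension $d$ that drives the near-orthogonality concentration in Corollary~\ref{cor:cond}, so a union bound over the $q-1$ cross terms (and over candidate nodes during decoding) keeps the cumulative error $o(1)$. A secondary assumption I would make explicit is that distinct node embeddings are not nearly collinear, i.e.\ $\mathbf v\cdot\mathbf v_k$ is bounded away from $1$ for $v\neq v_k$; otherwise two different nodes could both pass the $\approx 1$ test at position $k$.
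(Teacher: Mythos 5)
Your proposal takes essentially the same route as the paper: expand $\mathbf S\cdot\mathbf v^{(k-1)}$ using (\ref{eq:finalW}), note that the $i=k$ term equals $\mathbf v_k\cdot\mathbf v=1$ when $v=v_k$ (unit vectors), and eliminate the $i\neq k$ cross terms via the near-orthogonality of differently-shifted vectors guaranteed by Corollary~\ref{cor:cond} --- exactly the argument the paper sketches in its worked example (\ref{eqn:node sequence}). Your added refinements (cyclic shifts are orthogonal permutation matrices, the cumulative error over the $q-1$ cross terms must stay $o(1)$ under $q\ll d$, and distinct node embeddings must not be nearly collinear to exclude false positives) are points the paper leaves implicit rather than a different proof strategy.
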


Claim~\ref{cl:decoding} provides an efficient way to decode a given node sequence. In particular, to determine whether (and where) a node resides in a node sequence, it only takes quadratic complexity $O(d^2)$, where $d$ is typically small.

\subsubsection{Example}
\begin{figure}[tb]
\centering
\includegraphics[width=1.5in]{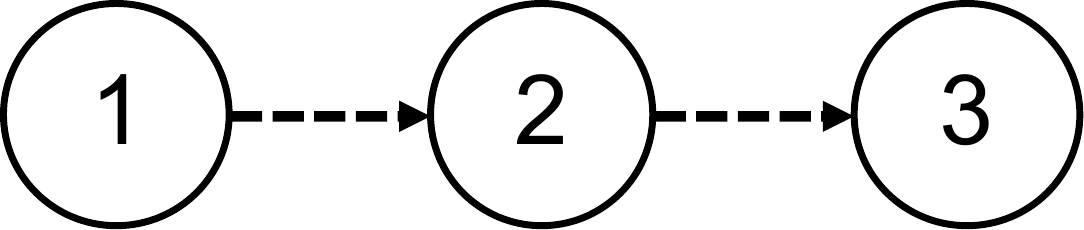}
\caption{Sample node sequence $S$ consisting of a simple linear chain of nodes $1$, $2$ and $3$.}
\label{fig:node sequence}
\end{figure}
Suppose we have a node sequence
$S$ as shown in Figure~\ref{fig:node sequence}.
By \sysname-S and our encoding method, we construct its node sequence vector as 
\begin{equation}
\mathbf S = \mathbf v^{(0)}_{1} + \mathbf v^{(1)}_{2} +\mathbf v^{(2)}_{3}.
\end{equation}
where $\mathbf v_{i}$ denotes the node vector of node $i$. 
Then each node can compute the inner product of 
its cyclic shifted vector with $\mathbf S$. If the result is approximately $1$, then its position in this node sequence is uniquely determined. For instance, 
\begin{equation}
\label{eqn:node sequence}
\mathbf S\cdot \mathbf v^{(1)}_2 = \mathbf v^{(0)}_{1}\cdot \mathbf v^{(1)}_2+ \mathbf v^{(1)}_{2}\cdot \mathbf v^{(1)}_2+\mathbf v^{(2)}_{3}\cdot \mathbf v^{(1)}_2\approx 0+1+0=1;
\end{equation}
see Corollary~\ref{cor:cond}.
Thus, given the encoded node sequence $\mathbf S$, we know node~$2$ is at the second position.

\subsection{Theoretical Foundation}
\label{subsec:theoFoundation}


We now present our theoretical results to support Claim~\ref{cl:decoding} for node sequence vector decoding.
We first prove that two independent random vectors 
of dimension $N$ are orthogonal with a high probability. Next, we prove that
the inner product of a vector and another random unit vector cyclically shifted by $m$ position is also close
to $0$ with a high probability, i.e., such vectors are orthogonal as well.

\begin{theorem}
\label{thm:ind}
Let $\mathbf x$ and $\mathbf y$ be $N$-dimensional unit vectors with i.i.d. (independent and identically distributed) random vector elements, then $\mathbb{E}[\mathbf x\cdot \mathbf y]=0$ and $\mathrm{Var}[\mathbf x\cdot\mathbf y]=1/N$.
\end{theorem}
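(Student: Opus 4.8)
The plan is to expand the inner product coordinate-wise and exploit two facts: that $\mathbf{x}$ and $\mathbf{y}$ are generated independently of one another, and that each is a unit vector, which pins down the second moment of its entries. Writing $\mathbf{x}\cdot\mathbf{y}=\sum_{i=1}^{N} x_i y_i$ with $x_i,y_i$ the $i$-th entries, the whole argument reduces to computing the mean and second moment of this sum.

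For the expectation, I would first argue that each entry has zero mean: since the entries are identically distributed and the generating law is sign-symmetric, $\mathbb{E}[x_i]=0$ for every $i$, and likewise $\mathbb{E}[y_i]=0$. Using the independence of $\mathbf{x}$ and $\mathbf{y}$ together with linearity of expectation then gives
\begin{equation*}
\mathbb{E}[\mathbf{x}\cdot\mathbf{y}]=\sum_{i=1}^{N}\mathbb{E}[x_i]\,\mathbb{E}[y_i]=0 .
\end{equation*}

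For the variance, since the mean vanishes we have $\mathrm{Var}[\mathbf{x}\cdot\mathbf{y}]=\mathbb{E}[(\mathbf{x}\cdot\mathbf{y})^2]$. Expanding the square and again invoking independence of $\mathbf{x}$ and $\mathbf{y}$,
\begin{equation*}
\mathbb{E}\Big[\big(\textstyle\sum_{i} x_i y_i\big)^2\Big]=\sum_{i}\sum_{j}\mathbb{E}[x_i x_j]\,\mathbb{E}[y_i y_j].
\end{equation*}
The off-diagonal terms ($i\neq j$) drop out because distinct entries of a vector are uncorrelated, $\mathbb{E}[x_i x_j]=\mathbb{E}[x_i]\mathbb{E}[x_j]=0$, leaving only $\sum_i \mathbb{E}[x_i^2]\,\mathbb{E}[y_i^2]$. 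The unit-norm constraint supplies the diagonal moment: taking expectations in $\sum_i x_i^2=1$ and using identical distribution yields $N\,\mathbb{E}[x_i^2]=1$, i.e. $\mathbb{E}[x_i^2]=1/N$. Substituting gives $\sum_{i=1}^{N}(1/N)(1/N)=1/N$, as claimed.

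The delicate point, and the step I would scrutinize most, is the vanishing of the off-diagonal correlations $\mathbb{E}[x_i x_j]$ for $i\neq j$. The unit-norm constraint $\sum_i x_i^2=1$ couples the coordinates, so they cannot be literally independent; the clean identity $\mathbb{E}[x_i x_j]=0$ must rest on the \emph{sign symmetry} (or exchangeability) of the coordinate law rather than on genuine independence. I would therefore make this symmetry hypothesis explicit, observe that it holds for the standard construction of a random unit vector (normalizing a vector of i.i.d.\ symmetric entries), and use it to justify both $\mathbb{E}[x_i]=0$ and the exact orthogonality of distinct coordinates, so that the residual dependence introduced by normalization does not contaminate either the mean or the variance computation.
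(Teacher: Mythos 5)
Your proof is correct, and it takes a genuinely different route from the paper's. The paper argues geometrically: since both vectors are unit vectors, $\mathbf x\cdot\mathbf y=\cos\theta$, and symmetry of the angle distribution gives $\mathbb{E}[\cos\theta]=0$; for the variance it fixes $\mathbf y=[1,0,\ldots,0]^T$ without loss of generality (justified by rotational invariance), so that $\mathbf x\cdot\mathbf y=x_1$, and then extracts $\mathbb{E}[x_1^2]=1/N$ from the unit-norm constraint exactly as you do. Your coordinate-wise expansion replaces that WLOG/rotation step by the algebraic identity $\mathbb{E}\big[\big(\sum_i x_iy_i\big)^2\big]=\sum_{i,j}\mathbb{E}[x_ix_j]\,\mathbb{E}[y_iy_j]$ together with the vanishing of off-diagonal correlations. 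What your route buys: it isolates exactly which hypotheses are used --- independence of $\mathbf x$ from $\mathbf y$, coordinate-wise sign symmetry, and the norm constraint --- and it needs only sign symmetry rather than full spherical symmetry; this matters because the theorem's literal hypothesis (i.i.d.\ elements of a unit vector) is degenerate, and the paper silently upgrades it to ``uniformly distributed across the sphere surface'' (its further claim that $\theta$ itself is uniform is inaccurate for $N\ge 3$, though harmless since only $\mathbb{E}[\cos\theta]=0$ is needed). What the paper's route buys is brevity: once the reduction to $\mathbf y=\mathbf e_1$ is accepted, the variance is a one-line consequence of the norm constraint, with no double sum. One caution on your side: your parenthetical suggestion that exchangeability could substitute for sign symmetry is wrong --- the vector $\mathbf x=\varepsilon\,(1,\ldots,1)/\sqrt N$ with a single random sign $\varepsilon$ has exchangeable, identically distributed coordinates and unit norm, yet $\mathbb{E}[x_ix_j]=1/N\neq 0$ and $\mathrm{Var}[\mathbf x\cdot\mathbf y]=1$ for an independent copy $\mathbf y$ --- so keep coordinate-wise sign-flip invariance as the stated hypothesis, which is what your final paragraph in fact does.
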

\begin{proof}
Since both $\mathbf x$ and $\mathbf y$ are unit vectors, we have $\mathbf x\cdot \mathbf y=||\mathbf x||_2||\mathbf y||_2\cos\theta=\cos\theta$, where $\theta$ is the angle between $\mathbf x$ and $\mathbf y$. Since both $\mathbf x$ and $\mathbf y$ are independent and uniformly distributed across the sphere surface, $\theta$ is also uniformly distributed, and thus $\mathbb{E}[\mathbf x\cdot \mathbf y]=0$. 

As $\mathbf x\cdot \mathbf y$ is purely determined by the angle $\theta$ between $\mathbf x$ and $\mathbf y$, without loss of generality, we select $\mathbf y=[1,0,\ldots,0]^T$ and only consider $\mathbf x=[x_1,x_2,\ldots,x_N]$ to be a random unit vector. Then, $\mathbf x\cdot \mathbf y=x_1$. Therefore, $\mathrm{Var}[\mathbf x\cdot \mathbf y]=\mathbb{E}[x_1^2]-\mathbb{E}^2[x_1]=\mathbb{E}[x_1^2]$. Since all entries in $\mathbf x$ are identically distributed, we have $\mathbb{E}[\mathbf x\cdot\mathbf x]=\mathbb{E}[\sum^N_{i=1}x^2_i]=\sum^N_{i=1}\mathbb{E}[x^2_i]=N\cdot\mathbb{E}[x^2_1]=1$. Therefore, $\mathrm{Var}[\mathbf x\cdot \mathbf y]=\mathbb{E}[x_1^2]=1/N$.
\end{proof}

Based on Theorem~\ref{thm:ind}, we have the following corollary.

\begin{corollary}
\label{cor:prop}
Let $\mathbf x$ and $\mathbf y$ be $N$-dimensional unit vectors with i.i.d. random vector elements, then event $\mathbf x\cdot \mathbf y=0$ happens almost surely, i.e., $\Pr[\mathbf x\cdot \mathbf y=0]\approx1$, when $N$ is large.
\end{corollary}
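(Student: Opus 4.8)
The plan is to derive the corollary directly from the moment bounds established in Theorem~\ref{thm:ind} by invoking a standard concentration inequality. Theorem~\ref{thm:ind} already supplies everything needed: $\mathbf x\cdot\mathbf y$ is a zero-mean random variable, $\mathbb{E}[\mathbf x\cdot\mathbf y]=0$, whose variance $\mathrm{Var}[\mathbf x\cdot\mathbf y]=1/N$ shrinks to $0$ as the dimension $N$ grows. The cleanest route from a vanishing variance to a statement that the inner product sits near $0$ with high probability is Chebyshev's inequality, so no new structural facts about the vectors are required.

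First I would fix an arbitrary tolerance $\epsilon>0$ and apply Chebyshev's inequality to the random variable $\mathbf x\cdot\mathbf y$, using the mean and variance from Theorem~\ref{thm:ind}:
$$\Pr[\,|\mathbf x\cdot\mathbf y|\geq\epsilon\,]\leq\frac{\mathrm{Var}[\mathbf x\cdot\mathbf y]}{\epsilon^2}=\frac{1}{N\epsilon^2}.$$
Passing to the complementary event then gives $\Pr[\,|\mathbf x\cdot\mathbf y|<\epsilon\,]\geq 1-\frac{1}{N\epsilon^2}$, and for any fixed $\epsilon$ the right-hand side tends to $1$ as $N\to\infty$. This is exactly the assertion that $\mathbf x\cdot\mathbf y=0$ holds almost surely for large $N$, read in the limiting, high-probability sense intended by the corollary.

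The one point I would phrase carefully—and the only place any subtlety arises—is the interpretation of the literal equality $\{\mathbf x\cdot\mathbf y=0\}$. For continuous i.i.d. entries this event has probability zero, so the conclusion must be understood as concentration within an arbitrarily narrow band around $0$ rather than exact equality. I would therefore state the result as the two-sided bound above and observe that the band width $\epsilon$ may be taken as small as desired while the probability still converges to $1$, since $1/(N\epsilon^2)\to 0$. Because the entire argument rests only on the variance computed in Theorem~\ref{thm:ind} together with Chebyshev's inequality, I do not anticipate any genuine obstacle; the main care is in wording the conclusion so that the ``$\approx 1$'' and the informal equality are unambiguous.
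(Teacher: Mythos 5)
Your proposal is correct and follows essentially the same route as the paper: the paper's own proof simply cites the zero mean and vanishing variance from Theorem~\ref{thm:ind} and stops there, leaving the concentration step implicit. Your version is in fact more complete, since you make that step explicit via Chebyshev's inequality and correctly note that the event $\{\mathbf x\cdot\mathbf y=0\}$ must be read as concentration in an arbitrarily small band around $0$ rather than literal equality.
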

\begin{proof}
According to Theorem~\ref{thm:ind}, $\mathbb{E}[\mathbf x\cdot \mathbf y]=0$ and $\lim_{N \to \infty}\mathrm{Var}[\mathbf x\cdot\mathbf y]=0$, thus completing the proof.
\end{proof}

Corollary~\ref{cor:prop} suggests that if two nodes are embedded into two independent vectors, then it is highly likely that these two nodes are not related in any form, i.e., their inner product is close to $0$. However, in a given graph, two arbitrary nodes may not be completely unrelated due to their graphical or semantic similarities, and our goal in this paper is to capture such similarities in the embedded vectors. Fortunately, even if two nodes are (loosely or strongly) related, i.e., the inner product of their embedded vectors is non-zero, we can proactively transform these vectors so that the transformed vectors are unrelated with high probability. In particular, leveraging the concept of cyclic shifting of vectors, we have the following theorem.

\begin{theorem}
\label{thm:cond}
Let $\mathbf x$ and $\mathbf y$ be  $N$-dimensional unit vectors with i.i.d. vector elements. If $\mathbf x \cdot \mathbf y = c$, then $\mathbb{E}[\mathbf x\cdot \mathbf y^{(m)}|\mathbf x \cdot \mathbf y = c]=0$ and $\mathrm{Var}[\mathbf x\cdot \mathbf y^{(m)}|\mathbf x \cdot \mathbf y = c]=(1-c^2)/(N-1)$, where $c$ ($|c|\leq 1$) is a constant and $m\neq kN$ ($m,k\in \mathbb Z^+$). 
\end{theorem}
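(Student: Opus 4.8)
The plan is to reduce the claim to a low-dimensional version of Theorem~\ref{thm:ind}, exploiting two structural facts: the cyclic shift $\mathbf{S}^m$ is an orthogonal (permutation) matrix, so $\mathbf{y}^{(m)}$ is again a unit vector; and the conditioning $\mathbf{x}\cdot\mathbf{y}=c$ pins down only the component of $\mathbf{x}$ along $\mathbf{y}$, leaving its orthogonal part spherically symmetric. First I would write the orthogonal decomposition $\mathbf{x}=c\,\mathbf{y}+\mathbf{x}_\perp$, where $\mathbf{x}_\perp\perp\mathbf{y}$ and $\|\mathbf{x}_\perp\|=\sqrt{1-c^2}$ because $\mathbf{x}$ is a unit vector. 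Substituting into the inner product gives
\begin{equation}
\mathbf{x}\cdot\mathbf{y}^{(m)}=c\,(\mathbf{y}\cdot\mathbf{y}^{(m)})+\mathbf{x}_\perp\cdot\mathbf{y}^{(m)},
\end{equation}
which cleanly separates a deterministic term from a random one.

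The next step is to dispatch the first term by showing that a unit vector is almost orthogonal to its own nontrivial cyclic shift, i.e.\ $\mathbf{y}\cdot\mathbf{y}^{(m)}\approx 0$ whenever $m\neq kN$. Writing $\mathbf{y}\cdot\mathbf{y}^{(m)}=\sum_i y_i y_{\sigma(i)}$ for the shift permutation $\sigma$, which is fixed-point-free precisely because $m\neq kN$, each summand pairs two \emph{distinct} coordinates; using $\mathbb{E}[y_i]=0$ and the pairwise uncorrelatedness of the coordinates of a spherically distributed unit vector yields $\mathbb{E}[\mathbf{y}\cdot\mathbf{y}^{(m)}]=0$, with variance $O(1/N)$ so that the term concentrates at $0$ for large $N$ --- the same phenomenon as Corollary~\ref{cor:prop}, now applied to $\mathbf{y}$ and its shift. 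With this term removed, and since $\mathbf{y}^{(m)}\perp\mathbf{y}$, the vector $\mathbf{y}^{(m)}$ lies entirely in the hyperplane $\mathbf{y}^\perp$, so $\mathbf{x}_\perp\cdot\mathbf{y}^{(m)}$ is an inner product taking place purely inside $\mathbf{y}^\perp$.

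It then remains to compute the conditional mean and variance of $\mathbf{x}_\perp\cdot\mathbf{y}^{(m)}$. Conditioned on $\mathbf{x}\cdot\mathbf{y}=c$, the vector $\mathbf{x}_\perp$ is uniformly distributed on the sphere of radius $\sqrt{1-c^2}$ inside the $(N-1)$-dimensional space $\mathbf{y}^\perp$, and $\mathbf{y}^{(m)}$ is a fixed unit direction in that space. I would reuse the argument of Theorem~\ref{thm:ind} verbatim in dimension $N-1$: the mean is $0$ by the reflection symmetry of $\mathbf{x}_\perp$, and writing $\mathbf{x}_\perp=\sqrt{1-c^2}\,\mathbf{w}$ with $\mathbf{w}$ a uniform unit vector in $\mathbf{y}^\perp$, the projection of $\mathbf{w}$ along a fixed unit direction has variance $1/(N-1)$, so that $\mathrm{Var}[\mathbf{x}_\perp\cdot\mathbf{y}^{(m)}]=(1-c^2)\cdot\tfrac{1}{N-1}$. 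Combining, $\mathbb{E}[\mathbf{x}\cdot\mathbf{y}^{(m)}\mid\mathbf{x}\cdot\mathbf{y}=c]=0$ and $\mathrm{Var}[\mathbf{x}\cdot\mathbf{y}^{(m)}\mid\mathbf{x}\cdot\mathbf{y}=c]=(1-c^2)/(N-1)$, as claimed.

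The step I expect to be the main obstacle is making the first term genuinely vanish: $\mathbf{y}^{(m)}$ is a \emph{deterministic} function of $\mathbf{y}$, so it is not an independent copy and Corollary~\ref{cor:prop} does not literally apply. The honest version requires the element-wise moment estimate above, and the exact identities in the statement should be read as holding once the autocorrelation $\mathbf{y}\cdot\mathbf{y}^{(m)}$ is treated as $0$ (exact in the idealized orthogonal model, and concentrated at $0$ with error $O(1/\sqrt{N})$ otherwise). A secondary technical point is justifying that the conditional law of $\mathbf{x}_\perp$ is exactly uniform on the radius-$\sqrt{1-c^2}$ subsphere, which follows from the invariance of the uniform distribution on the sphere under rotations fixing $\mathbf{y}$.
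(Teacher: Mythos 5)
Your proposal takes a genuinely different route from the paper, and the ``main obstacle'' you flag at the end is not a defect of your write-up: it is a real gap in the paper's own proof, and in fact the exact variance identity of Theorem~\ref{thm:cond} is false as stated. The paper's proof argues that, as in Theorem~\ref{thm:ind}, one may take $\mathbf y=[1,0,\ldots,0]^T$ ``without loss of generality'' because the inner product depends only on the angle; then $\mathbf y^{(m)}$ is another standard basis vector, $\mathbf x\cdot\mathbf y=x_1$, $\mathbf x\cdot\mathbf y^{(m)}=x_\gamma$ with $\gamma\neq 1$, and the mean and variance follow from the uniformity of $(x_2,\ldots,x_N)$ on the radius-$\sqrt{1-c^2}$ subsphere --- which is essentially your third paragraph carried out in coordinates. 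But that reduction is invalid for this statistic: the cyclic shift $\mathbf S^m$ does not commute with rotations, so the joint law of $\bigl(\mathbf x\cdot\mathbf y,\ \mathbf x\cdot\mathbf y^{(m)}\bigr)$ is \emph{not} preserved when $\mathbf y$ is rotated onto $e_1$. Setting $\mathbf y=e_1$ silently forces the autocorrelation $\rho:=\mathbf y\cdot\mathbf y^{(m)}$ to equal $0$ exactly --- precisely the term your decomposition $\mathbf x\cdot\mathbf y^{(m)}=c\rho+\mathbf x_\perp\cdot\mathbf y^{(m)}$ keeps visible --- whereas for a generic random unit $\mathbf y$ it is a nondegenerate random variable with $\mathbb{E}[\rho^2]=\Theta(1/N)$.

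Concretely, since conditioning on $\mathbf x\cdot\mathbf y=c$ does not alter the law of $\mathbf y$, the law of total variance gives
\begin{equation*}
\mathrm{Var}\bigl[\mathbf x\cdot\mathbf y^{(m)}\,\big|\,\mathbf x\cdot\mathbf y=c\bigr]
=\frac{(1-c^2)\bigl(1-\mathbb{E}[\rho^2]\bigr)}{N-1}+c^2\,\mathbb{E}[\rho^2],
\end{equation*}
where $\mathbb{E}[\rho^2]=1/(N+2)$ for spherically uniform $\mathbf y$ when $2m\not\equiv 0\pmod N$ (and $2/(N+2)$ otherwise); this differs from the claimed $(1-c^2)/(N-1)$. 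Two sanity checks: for $c=1$ we have $\mathbf x=\mathbf y$, so $\mathbf x\cdot\mathbf y^{(m)}=\rho$ has strictly positive variance while the theorem asserts variance $0$; and for $N=2$, $m=1$, writing $\mathbf x=(\cos\alpha,\sin\alpha)$, $\mathbf y=(\cos\beta,\sin\beta)$ yields $\mathbf x\cdot\mathbf y^{(1)}=\sin(\alpha+\beta)$, whose conditional variance is $1/2$ for every $c$, not $1-c^2$. What does survive --- and your argument establishes it --- is that the conditional mean is exactly $0$ (both of your terms have zero mean, by fixed-point-freeness of the shift permutation and the reflection symmetry of $\mathbf x_\perp$) and that the conditional variance is $O(1/N)$ uniformly in $c$. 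That is all Corollary~\ref{cor:cond} and the decoding scheme actually require, so the downstream results stand; but the exact identity in Theorem~\ref{thm:cond} must be weakened along the lines your final paragraph anticipates, and the paper's angle-only reduction cannot be repaired to prove it.
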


\begin{proof}
Based on the proof of Theorem~\ref{thm:ind}, we know that $\mathbf x\cdot \mathbf y$ is only determined by the angel $\theta$ between them. Therefore, again, without loss of generality, let $\mathbf y=[1,0,\ldots,0]^T$ and $\mathbf x=[x_1,x_2,\ldots,x_N]$. Then $\mathbf x\cdot \mathbf y=x_1=c$ and $\mathbf x\cdot \mathbf y^{(m)}=x_{\gamma}$, where ${\gamma}=1+(m\mod N)$. Then $\mathbb{E}[\mathbf x\cdot \mathbf y^{(m)}|\mathbf x \cdot \mathbf y = c]=\mathbb{E}[x_{\gamma}|\mathbf x \cdot \mathbf y = c]$. As $m\neq kN$ ($k\in \mathbb Z^+$), we have $\mathbf y\neq \mathbf y^{(m)}$. Moreover, $x_{\gamma}$ is uniformly distributed across the sphere surface under the condition that $x_1=c$. Therefore, $\mathbb{E}[x_{\gamma}|\mathbf x \cdot \mathbf y = c]=0$, i.e., $\mathbb{E}[\mathbf x\cdot \mathbf y^{(m)}|\mathbf x \cdot \mathbf y = c]=0$. Next, $\mathrm{Var}[\mathbf x\cdot \mathbf y^{(m)}|\mathbf x \cdot \mathbf y = c]=\mathbb{E}[x^2_{\gamma}|x_1=c]-\mathbb{E}^2[x_{\gamma}|x_1=c]=\mathbb{E}[x^2_{\gamma}|x_1=c]$. Since $x_2,x_3,\ldots,x_N$ in $\mathbf x$ are identically distributed, we have $\mathbb{E}[\mathbf x\cdot\mathbf x]=\mathbb{E}[\sum^N_{i=1}x^2_i|x_1=c]=\sum^N_{i=1}\mathbb{E}[x^2_i|x_1=c]=c^2+(N-1)\mathbb{E}[x^2_{\gamma}|x_1=c]=1$. Therefore, $\mathrm{Var}[\mathbf x\cdot \mathbf y^{(m)}|\mathbf x \cdot \mathbf y = c]=(1-c^2)/(N-1)$.
\end{proof}

Based on Theorem~\ref{thm:cond}, the following corollary can be proved. 

\begin{corollary}
\label{cor:cond}
Let $\mathbf x$ and $\mathbf y$ be $N$-dimensional unit vectors with i.i.d. vector elements, then given $\mathbf x \cdot \mathbf y = c$ ($|c|\leq 1$), event $\mathbf x\cdot \mathbf y^{(m)}=0$ ($m\neq kN$ and $m,k\in \mathbb Z^+$) happens almost surely, i.e., $\Pr[\mathbf x\cdot \mathbf y^{(m)}=0|\mathbf x \cdot \mathbf y = c]\approx 1$, when $N$ is large.
\end{corollary}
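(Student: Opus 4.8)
The plan is to mirror the argument already used for Corollary~\ref{cor:prop}, now invoking Theorem~\ref{thm:cond} in place of Theorem~\ref{thm:ind}. First I would introduce the shorthand $Z := \mathbf x \cdot \mathbf y^{(m)}$ and apply Theorem~\ref{thm:cond} directly to read off the conditional first and second moments of $Z$ given the event $\mathbf x \cdot \mathbf y = c$: namely $\mathbb{E}[Z \mid \mathbf x \cdot \mathbf y = c] = 0$ and $\mathrm{Var}[Z \mid \mathbf x \cdot \mathbf y = c] = (1-c^2)/(N-1)$. These are exactly the two quantities the corollary needs, so no fresh computation is required.

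The second step is to observe that the conditional variance vanishes as $N$ grows. Since $|c| \le 1$, we have $0 \le 1 - c^2 \le 1$, so the variance is at most $1/(N-1)$ and hence $\lim_{N \to \infty} \mathrm{Var}[Z \mid \mathbf x \cdot \mathbf y = c] = 0$ uniformly over the admissible range of $c$. Combined with the zero conditional mean, this forces $Z$ to concentrate tightly around $0$ for large $N$; formally, for any fixed tolerance $\epsilon > 0$, Chebyshev's inequality yields $\Pr[|Z| \ge \epsilon \mid \mathbf x \cdot \mathbf y = c] \le (1-c^2)/((N-1)\epsilon^2) \to 0$, so $\Pr[|Z| < \epsilon \mid \mathbf x \cdot \mathbf y = c] \to 1$. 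This is precisely the sense in which the event $Z = 0$ holds almost surely when $N$ is large.

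The one point requiring care---and the only genuine obstacle---is interpretive rather than computational: because $Z$ is a continuous random variable, the literal event $\{Z = 0\}$ has probability zero, so the statement $\Pr[\mathbf x\cdot \mathbf y^{(m)}=0 \mid \mathbf x \cdot \mathbf y = c]\approx 1$ must be read as concentration, i.e.\ that $Z$ lies within an arbitrarily small neighborhood of $0$ with probability tending to $1$. This matches the reading already implicit in Corollary~\ref{cor:prop}, so no new machinery is introduced; the whole argument collapses to a two-line consequence of Theorem~\ref{thm:cond} together with the boundedness observation $1 - c^2 \le 1$. I would close by noting that the parallel with Corollary~\ref{cor:prop} is exact, the only change being that the decaying variance is now $(1-c^2)/(N-1)$ rather than $1/N$.
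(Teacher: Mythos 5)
Your proposal follows exactly the paper's own route: invoke Theorem~\ref{thm:cond} for the conditional mean $0$ and conditional variance $(1-c^2)/(N-1)$, then let the vanishing variance as $N\to\infty$ (irrespective of $c$) deliver the concentration claim. Your additions---the explicit Chebyshev bound and the remark that the event $\{\mathbf x\cdot\mathbf y^{(m)}=0\}$ must be read as concentration in an $\epsilon$-neighborhood rather than a literal point mass---only make rigorous what the paper leaves implicit, so the argument is correct and essentially identical.
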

\begin{proof}
According to Theorem~\ref{thm:cond}, $\mathbb{E}[\mathbf x\cdot \mathbf y^{(m)}|\mathbf x \cdot \mathbf y = c]=0$ and $\lim_{N\to \infty}\mathrm{Var}[\mathbf x\cdot \mathbf y^{(m)}|\mathbf x \cdot \mathbf y = c]=0$, irrespective of the value of $c$, thus completing the proof.
\end{proof}

Corollary~\ref{cor:cond} reveals an interesting fact: When two nodes $a$ and $b$ in a graph are related, then we know their embedded vectors $\mathbf v_a\cdot \mathbf v_b=c$ and $c$ is non-zero. Nevertheless, if one of $\mathbf v_a$ and $\mathbf v_b$, say $\mathbf v_b$, is cyclically shifted, yielding $\mathbf v^{(m)}_b$ with $\mathbf v_b\neq\mathbf v^{(m)}_b$, then $\mathbf v_a\cdot\mathbf v^{(m)}_b=0$ happens with probability close to $1$ when the vector dimension is sufficiently large. Furthermore, this conclusion holds, regardless of the values of $c$ and $m$, as long as $\mathbf v_b\neq\mathbf v^{(m)}_b$. In other words, no matter how $\mathbf v_a$ and $\mathbf v_b$ are related, i.e., loosely or strongly, it is almost surely the case that the inner product of $\mathbf v_a$ and $\mathbf v^{(m)}_b$ is $0$. Motivated by this result, we develop the efficient node sequence embedding and decoding method in Section~\ref{subsec:node sequenceEmbed}.

\section{Evaluation}
\label{sec:eval} 

This section evaluates \sysname under varying scenarios, with multiple datasets and applications. 



\subsection{Datasets}

We evaluate \sysname on the following datasets that contain graph information along with textual descriptions: 
(1)~\cite{Wikispeedia}: it contains both Wikipedia plain text articles (text descriptions)
and hyper links between articles (graph structure).
It is a directed graph with $4,604$ nodes (each is a Wikipedia article) and $119,882$ 
hyper links. 
(2) Citation Network \cite{citation_data}: it contains both textual descriptions (title, authors and abstract of the paper) 
and graphical structure (citations). It is a directed graph with $27,770$ nodes (papers) and $352,807$ edges (citations). 

\subsection{\sysname-S Model Training}
To train our \sysname-S model, we first define the loss function based on (\ref{eq:obj2}). Suppose the vocabulary size associated with each node $v\in V$ is similar, i.e., we assume $\forall v\in V,w_v\approx \tau$ ($\tau$ is a constant). Then with the concepts of $F_{\mathcal{G}}$ and $F_T$ in (\ref{eq:twoLoss}), we define the loss function as
\begin{equation}
\label{eq:loss}
L:=-\tau F_{\mathcal{G}}-F_T.
\end{equation}
We then use stochastic gradient descent to minimize our loss function. 
Let $\eta$ be the learning rate. At each iteration, we update the model parameters by adding a fraction of $-\nabla L$, i.e.,
$-\eta\nabla L=\tau\eta\nabla F_{\mathcal{G}}+\eta\nabla F_T$.
Since the per-node vocabulary size is much larger than $1$, the node neighborhood sampling via random walk is much less frequent than the textual neighborhood sampling. 
Therefore, we want to inject more input data consisting of only the nodes' graphical neighborhood information. 
To do so, we adjust the model parameter update rule $-\eta\nabla L$ as
\begin{equation}
\label{eq:newUpdateRule}
\beta_1\nabla F_{\mathcal{G}}+\beta_2\nabla F_T\ (\beta_1>\beta_2>0),
\end{equation}
where $\beta_1$ and $\beta_2$ 
are the equivalent learning rates 
for graph inputs and text inputs, respectively.

In addition, the output of the \sysname-S (both add and concat) architecture is obtained by a softmax function, which generally incurs high cost in practice when computing $\nabla L$. 
Therefore, to ensure computational efficiency, we use Noise Contrastive Estimation (NCE) \cite{Gutmann12} to approximate softmax as discussed in \cite{word2vec}.

\subsection{\sysname-S Evaluation}

We first evaluate \sysname-S, which is compared against the following baseline solutions.

\noindent \textbf{Leverage graph information alone:} To compare with schemes that use graph information
alone, we use node2vec 
since it has been shown to be flexible to 
capture different graph properties.

\noindent \textbf{Leverage text information alone:} To compare against schemes that use textual information alone,
we use semantic vectors from paragraph2vec \cite{paragraph2vec} 
since it outperforms 
other schemes such as 
Recursive Neural Tensor Networks (\cite{rntn})
for tasks like Sentiment Analysis. As in \sysname-S, we also study two implementations of paragraph2vec, i.e., addition and concatenation operations at the hidden layer, referred to as paragraph2vec (add) and paragraph2vec (concat), respectively.

\noindent \textbf{Leverage text+graph information:} For joint text/graph learning, we compare with the following:

\begin{enumerate}[1)]
\item \textbf{Initialize with semantic vectors:} We learn embeddings using node2vec, but rather than using random initialization, we initialze the vectors using paragraph2vec.


\item \textbf{Initialize with graphical vectors:} Here, we learn final embeddings using paragraph2vec, but initialize them with node2vec, i.e., just reverse of the scheme above.


\item \textbf{Iterative Vectorization:} The above approaches only leverage semantic or graphical vectors for initializations.
Here, we try to capture both iteratively. Specifically, in one iteration, we compute node embedding via node2vec with the embeddings from the previous iteration as initializations; the corresponding results are then fed to paragraph2vec as initializations to further compute node embeddings, after which we go to the next iteration. 
We repeat this process multiple times ($5$ times in our experiment) to get the final embeddings.
The pseudo-code is shown in Algorithm~\ref{alg:bicluster}.

\item \textbf{Concatenation of graphical and semantic vectors:} Here, we simply concatenate the vectors obtained from paragraph2vec and node2vec and use them as our node embedding vectors.
\end{enumerate}

\begin{algorithm}[tb]
\vspace{1em}
\caption{Iterative Vectorization}
\label{alg:bicluster}
\begin{algorithmic}[1]
\Procedure{Vectorize($text$, $graph$)}{}
    \State $vector \gets randomInit()$
    \For {$i\gets 1, k$}
      \State $vector \gets node2vec(graph, vector)$
      \State $vector \gets paragraph2vec(text, vector)$
    \EndFor
    \State \Return $vector$
\EndProcedure
\end{algorithmic}
\end{algorithm}


\subsubsection{Experimental Setup} We first learn the vector representations and then use these vector representations for two different tasks: 

\textbf{Multi-label classification:}
Wikipedia pages are classified into different categories, such as history, science, people, etc. This ground truth information
is included in the Wikispeedia dataset (which is not used while learning the vectors). There are $15$ different top level categories,
and our multi-label classification task tries to classify a page into \emph{one or more}
of these categories based on the vectors obtained from different algorithms. 
We train the OneVsRestClassifier \textcolor{scolor}{(SVM, linear kernel)} from scikit-learn for this task. 

\textbf{Link prediction:} Since no category information is available for the Citation Network, we evaluate for link prediction. In particular, 1\% of existing citations are removed, after which vectors are learned on this network. We use these removed links as positive samples for link prediction. For negative samples, we randomly sample the same number of pairs which are not linked via a citation in the original network. To obtain the \textit{similarity features w.r.t. a pair of nodes}, after experimenting with several alternatives, 
we chose the element-wise absolute difference and train SVM classifier \textcolor{scolor}{(linear kernel)}
for link prediction.


Parameter settings: 
(i) We perform $\kappa$ random walks starting from each node in the graph ($\kappa$ is $10$ for Wikispeedia and $3$ for Citation Network, since Citation Network is larger);
(ii) each walk is of length $80$ as recommended by Node2Vec \cite{node2vec};
(iii) we use sliding window of size $5$ for neighboring word sampling; 
(iv) the default node vector dimension is $128$;
(v) multi-label classification error is the misclassification rate over the test set;
(vi) link prediction error is the percentage of incorrect link predictions over all pairs of papers in the test set; (vii) learning rates $\beta_1$ and $\beta_2$ in (\ref{eq:newUpdateRule}) are selected based on the validation set, and the error is reported on the test set. 


\begin{figure*}[tb]
\centering
\subfloat[$40\%$:$30\%$:$30\%$]{\includegraphics[width=1.2\figurewidthC]{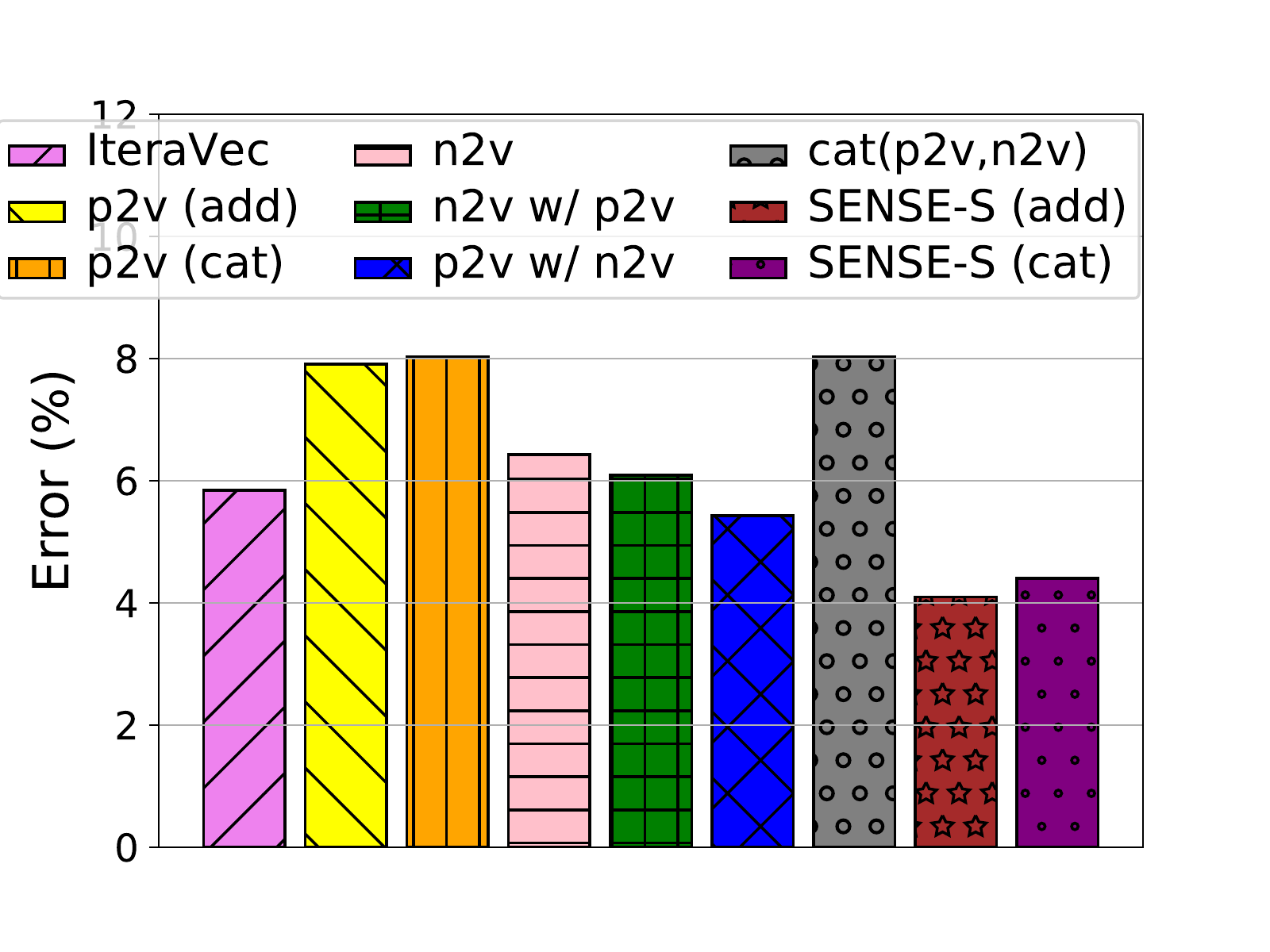}} \hspace{0.1in}
\subfloat[$60\%$:$20\%$:$20\%$]{\includegraphics[width=1.2\figurewidthC]{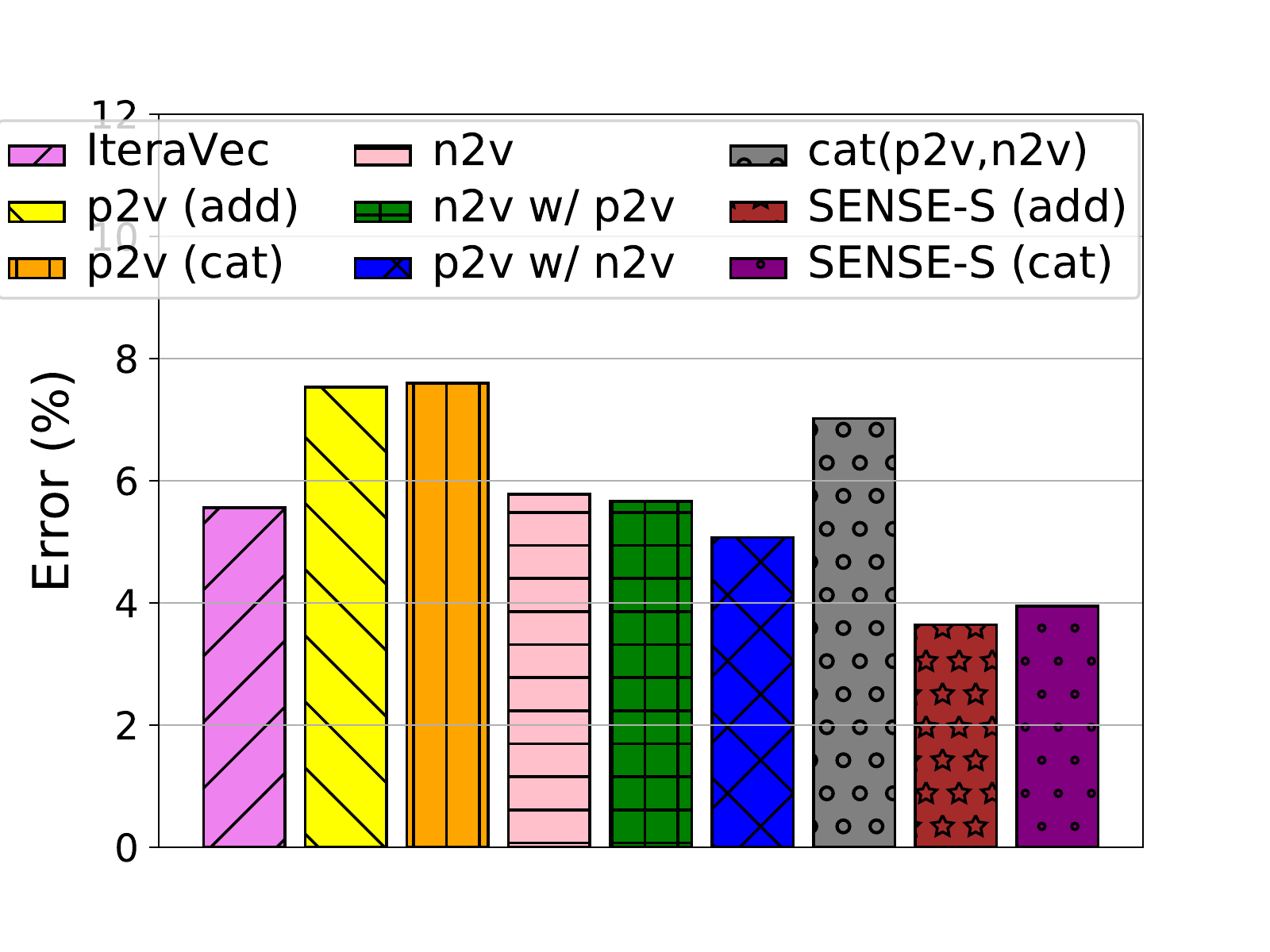}} \hspace{0.1in}
\subfloat[$70\%$:$15\%$:$15\%$]{\includegraphics[width=1.2\figurewidthC]{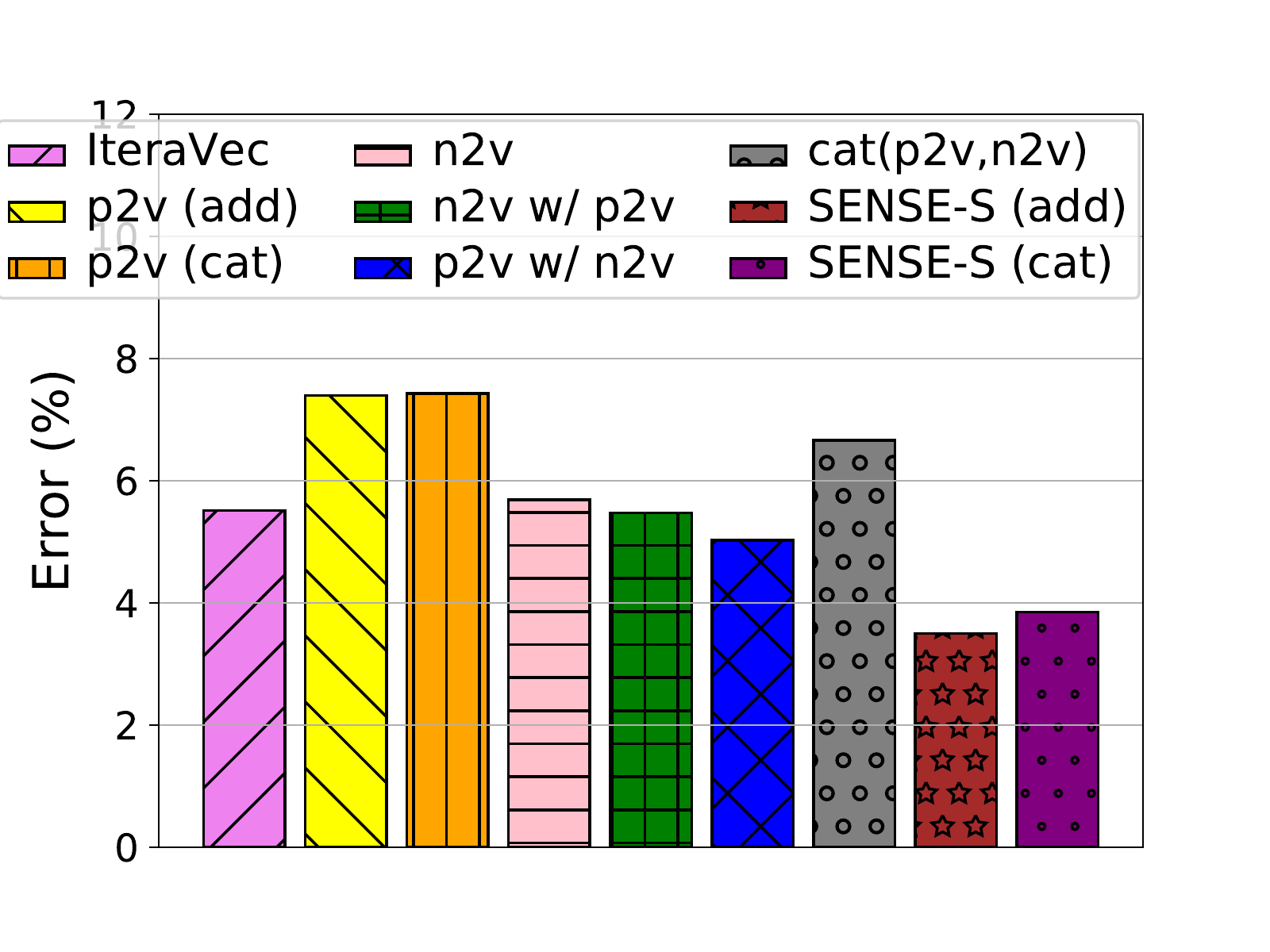}} \hspace{0.1in}
\subfloat[$80\%$:$10\%$:$10\%$]{\includegraphics[width=1.2\figurewidthC]{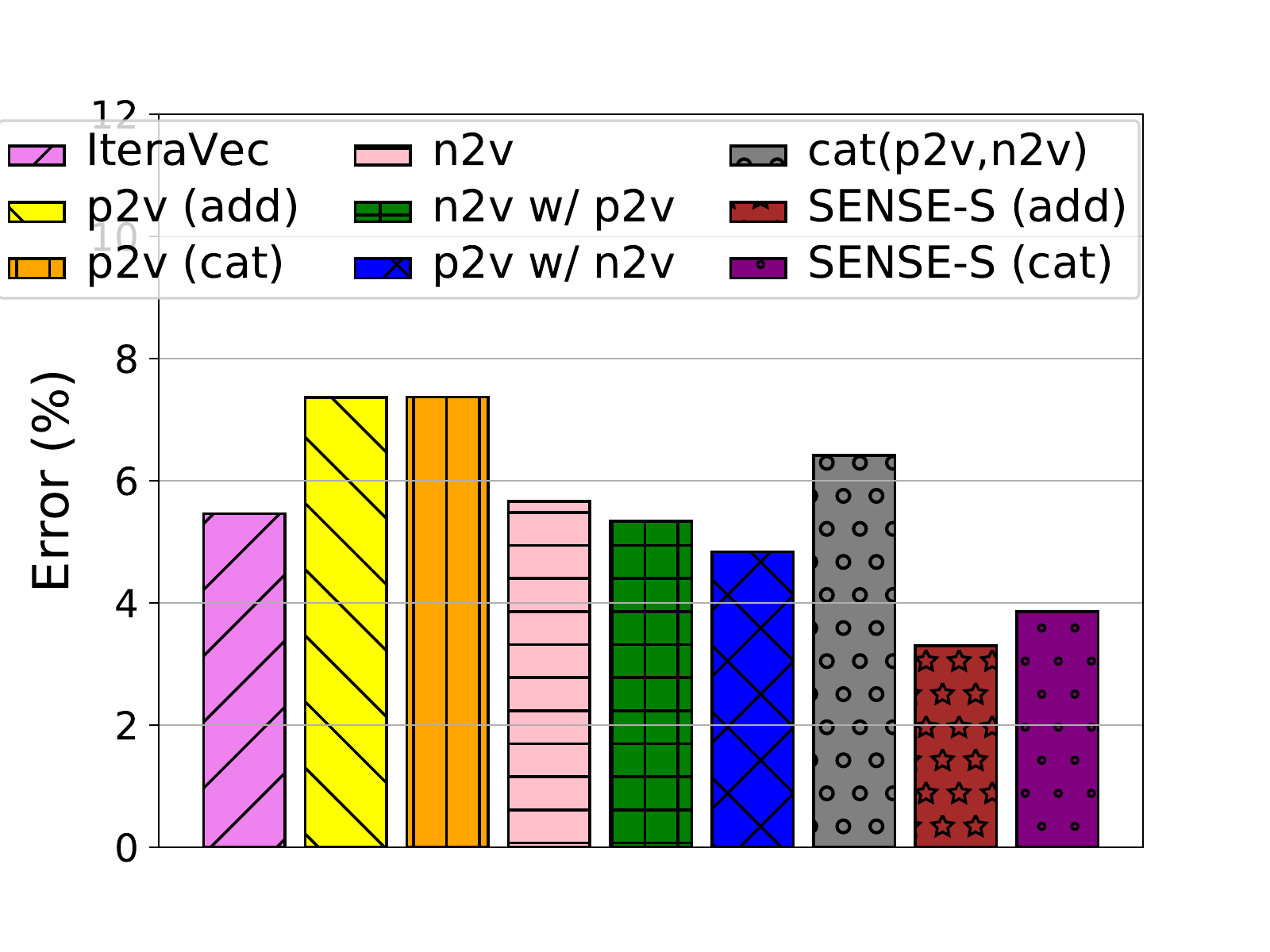}} \hspace{0.1in}
\caption{Multi-label classification on Wikispeedia dataset for varying Train:Validation:Test splits of data. Document length = first $500$ characters (n2v: node2vec, p2v (add)/(cat): paragraph2vec (add)/(concat), n2v w/ p2v: node2vec with paragraph2vec (add) initialization, p2v w/ n2v: paragraph2vec (add) with node2vec initialization, IteraVec: Iterative Vectorization, cat(p2v,n2v): concatenation of p2v and n2v vectors).}
\label{fig:wiki500}
\end{figure*}

\subsubsection{Multi-label classification error} The error of multi-label classification is reported in Figure~\ref{fig:wiki500}, where the first $500$ characters
of each Wikipedia page are selected as its textual description. 
Figures~\ref{fig:wiki500} (a), (b) and (c) correspond to different splits of the dataset \textcolor{scolor}{(based on number of nodes)} into training, validation and test samples. The following observations can be made. First, as expected, the results of all the schemes improve with higher ratio of training  to test samples. 
Second, in general, schemes that leverage both textual and graphical information incur lower errors. Third, among the schemes that utilize both textual and graphical information, \sysname-S (add) and \sysname-S (concat) consistently perform the best. This
is because we train the network to co-learn the textual and graphical information to ensure that both objectives 
converge. This is in contrast with other schemes where the two objectives, due to the loose coupling, are not guaranteed to converge.
Finally, \sysname-S (add) (in Figure \ref{fig:wiki500} (c)) outperforms node2vec by over $40\%$, paragraph2vec (add) by over $55\%$ and the closest baseline scheme that leverages both text and graph by $30\%$.
This confirms the benefit of co-learning features from textual as well as graphical information under the \sysname-S architecture. 
Moreover, we also see similar trends using the first $1,000$ characters; results are in Figure~\ref{fig:wiki1000}. Intuitively, this is because the most critical textual descriptions that differentiate them from others are mostly included in the first $500$ characters; therefore, $1,000$ characters do not further provide useful information for learning.\looseness=-1

\begin{figure*}[h]
\centering
\subfloat[$40\%$:$30\%$:$30\%$]{\includegraphics[width=1.2\figurewidthC]{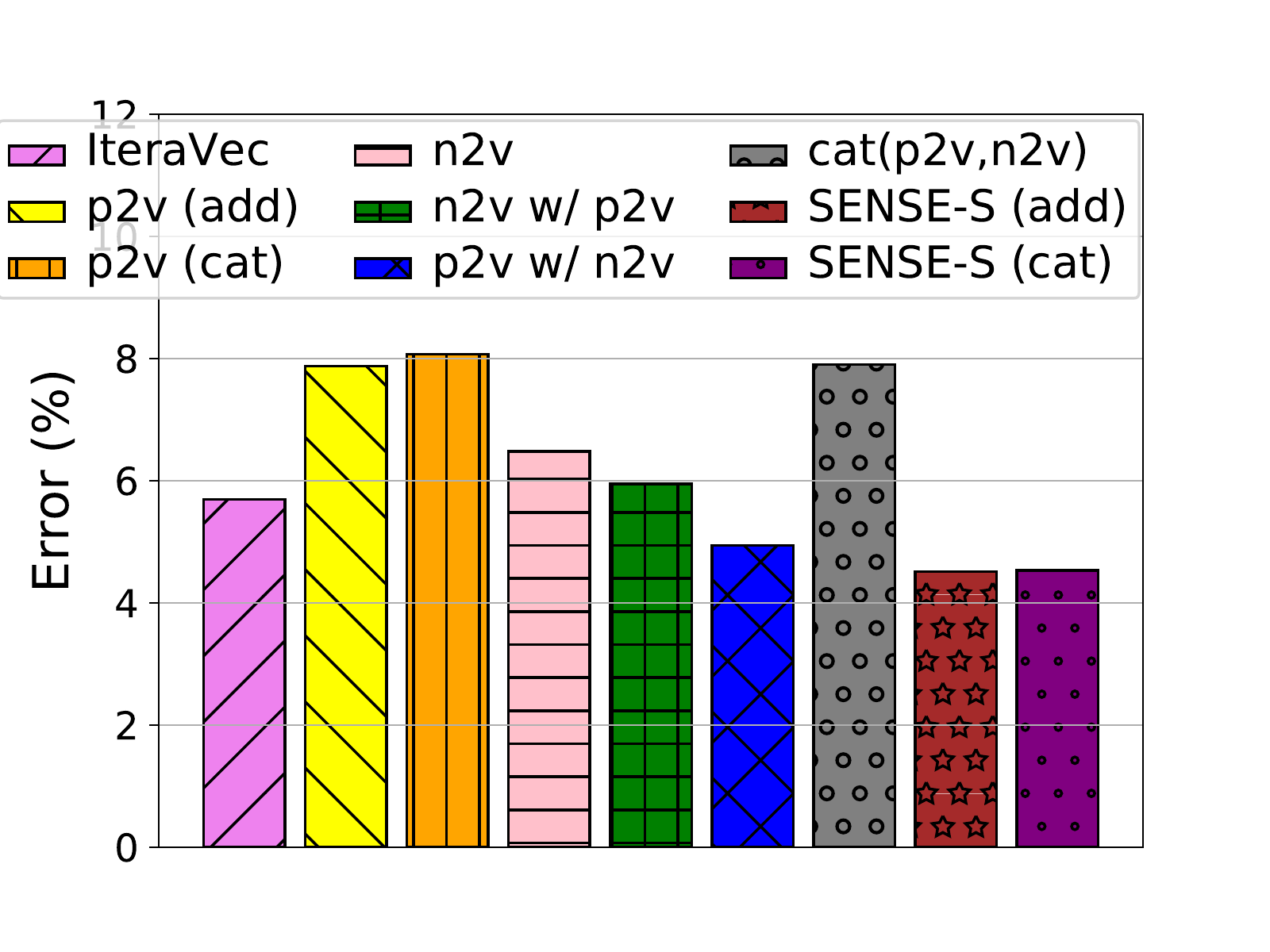}} \hspace{0.1in}
\subfloat[$60\%$:$20\%$:$20\%$]{\includegraphics[width=1.2\figurewidthC]{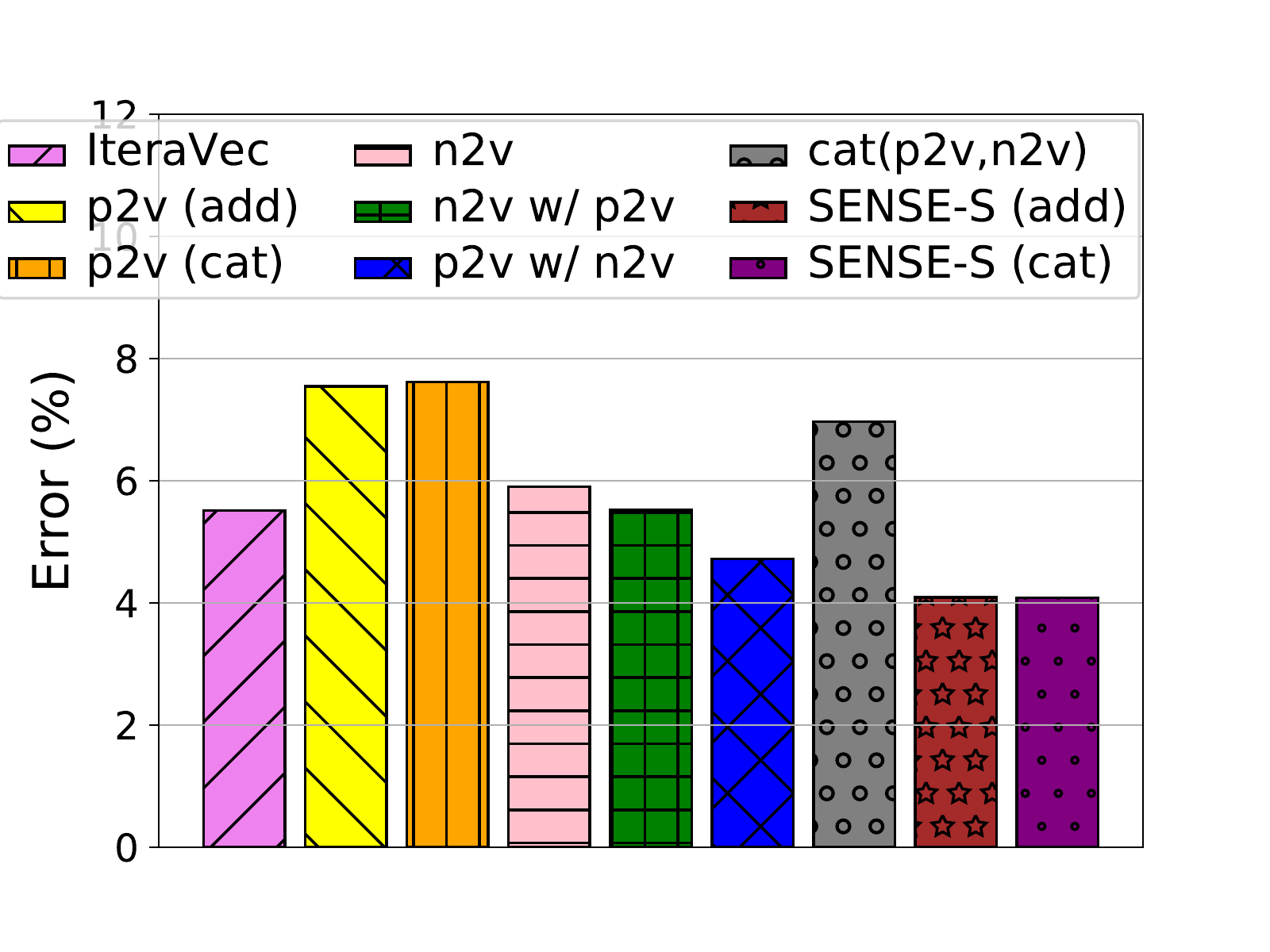}} \hspace{0.1in}
\subfloat[$70\%$:$15\%$:$15\%$]{\includegraphics[width=1.2\figurewidthC]{wiki_lrtext2.pdf}} \hspace{0.1in}
\subfloat[$80\%$:$10\%$:$10\%$]{\includegraphics[width=1.2\figurewidthC]{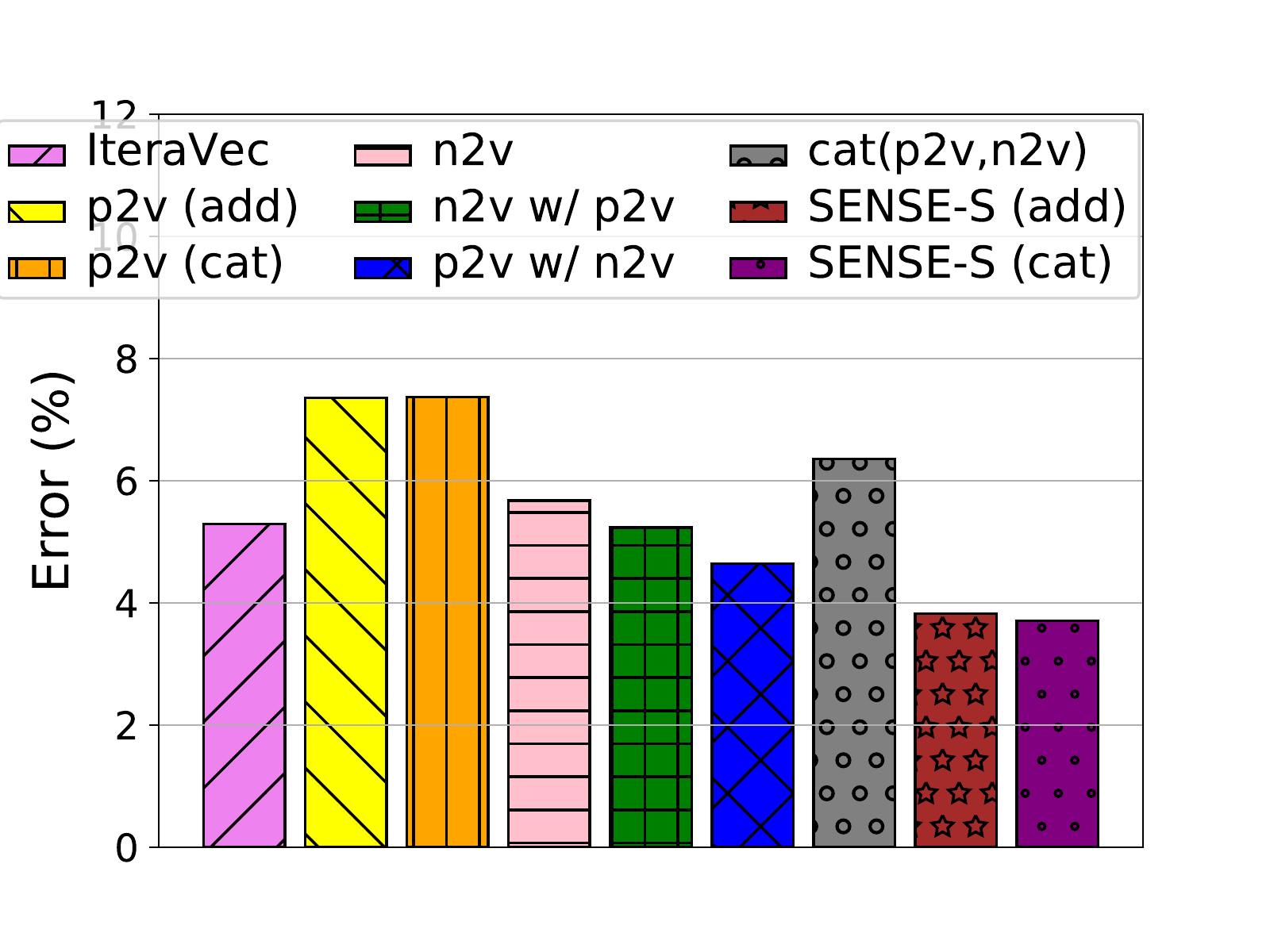}} \hspace{0.1in}
\caption{Multi-label classification on Wikispeedia dataset for varying Train:Validation:Test splits of data. Document length = first $1000$ characters.}
\label{fig:wiki1000}
\end{figure*}

\subsubsection{Link prediction error} The error of link prediction in the Citation Network is reported in Figure~\ref{fig:lp_cit}. We fix train:valid:test to 60\%:20\%:20\% \textcolor{scolor}{(based on number of links)} and use the first 500 characters as text description. We make several interesting observations.
First, schemes that use graph information alone (node2vec) substantially outperform schemes that use text descriptions alone (paragraph2vec) for link prediction task. Intuitively, this is because the neighborhood information, which is important for link prediction task, is captured effectively by the node embeddings obtained from node2vec-like techniques. 
Second, even in cases where the difference in accuracy using the two different sources of information is large, \sysname-S (add) and (cat) are robust, and can effectively extract useful information from text descriptions to further reduce the error of link prediction that uses graph structure alone. Finally, this is significant because it demonstrates the effectiveness of \sysname-S in a variety of scenarios, including cases where the two sources of information may not be equally valuable. 

\begin{figure}[tb]
\vspace{-1em}
\centering
\includegraphics[width=2.55in]{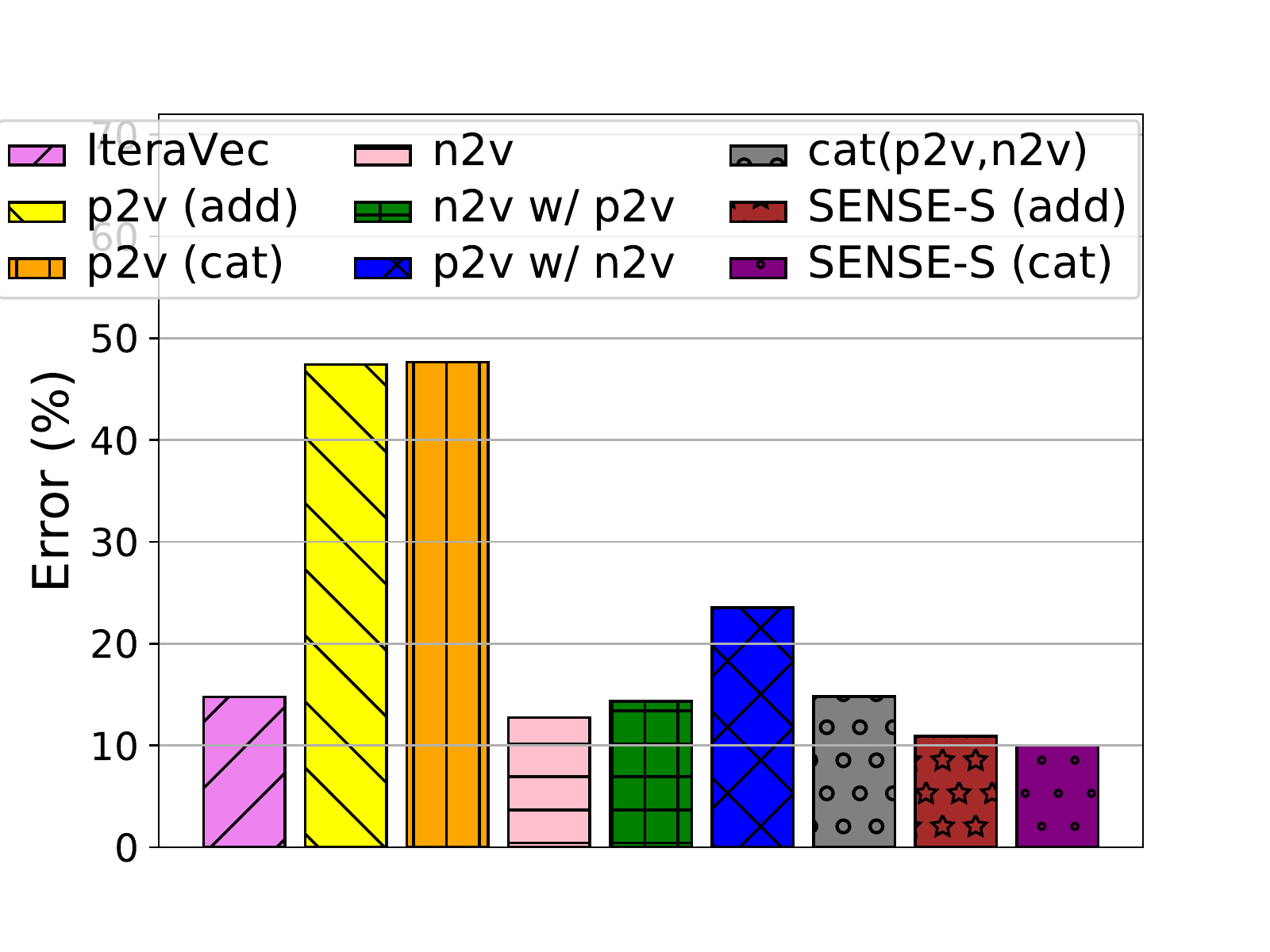}
\vspace{-1.5em}
\caption{Link prediction in Citation Network.}
\label{fig:lp_cit}
\vspace{-1em}
\end{figure}

\subsection{\sysname Evaluation}
We now evaluate the accuracy of encoding/decoding by \sysname for node sequences.
We evaluate on three experiments via constructing node sequences in the following different ways:

\textbf{Experiment~1:} the node at every position in a sequence is chosen uniformly at random from $4,604$ Wikispeedia nodes. Note that as mentioned earlier, this node sequence may contain repeated nodes (which is allowed), and may or may not be a subgraph of the original graph. 

\textbf{Experiment~2:} the node sequences are constructed by performing random walks on the Wikispeedia graph.  

\textbf{Experiment~3:} the node sequences are constructed by performing random walks on the Physics Citation Network. 
Note that in both Experiments $2$ and $3$, adjacent nodes in the node sequences will have \emph{related vector embeddings}.
Next, for such constructed node sequences, their vector representations are computed by \sysname-S.
Given these sequence vectors, we then decode the node at each position. 
We evaluate the decoding accuracy under different sequence lengths and vector dimensions, 
as reported in Figure~\ref{fig:wfacc}. 

\begin{figure*}[tb]
\centering
\subfloat[Experiment 1: Random node sequences]{\includegraphics[width=1.15\figurewidthC]{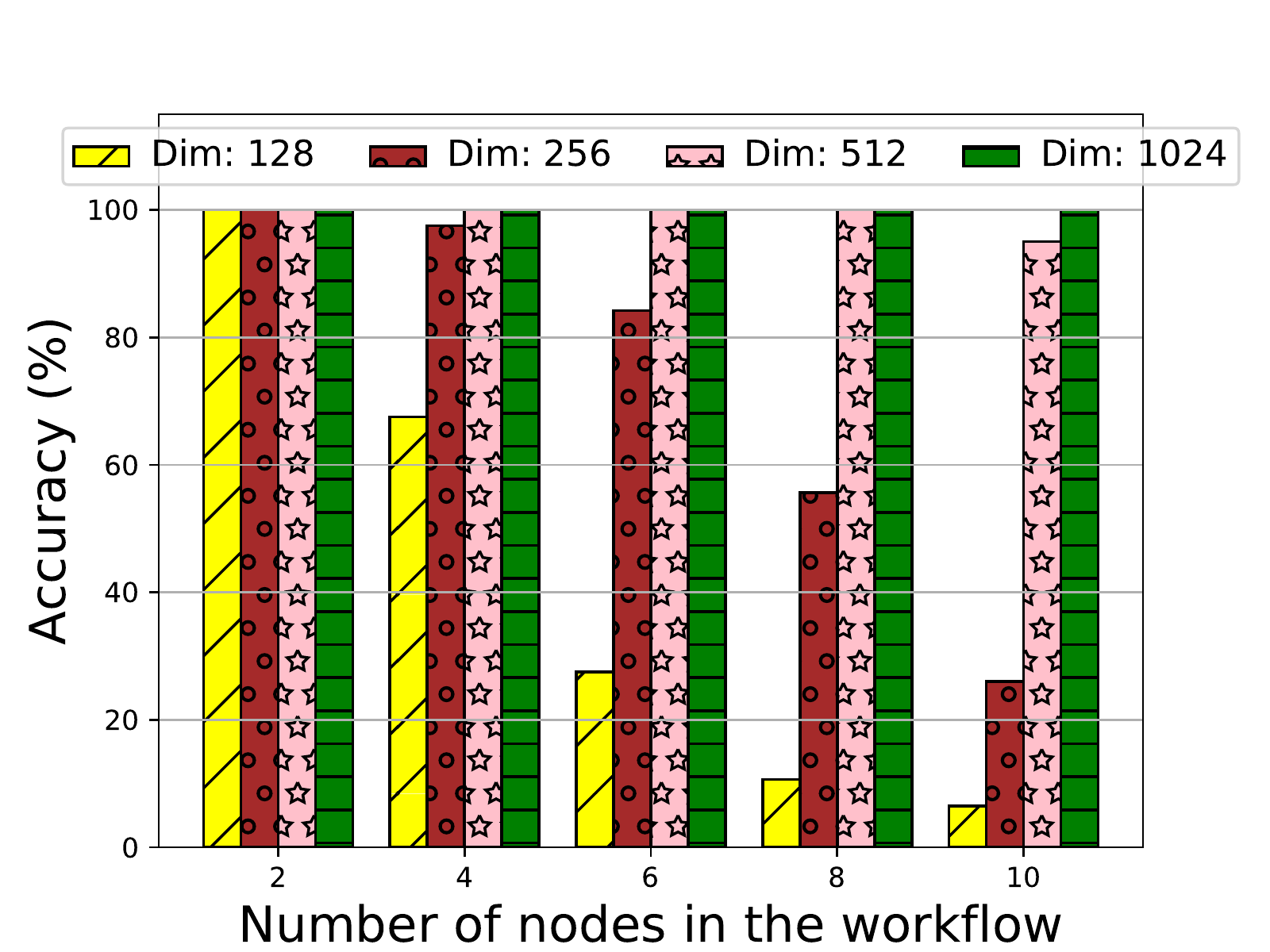}} \hspace{0.05in}
\subfloat[Experiment 2: Random walk, Wikispeedia]{\includegraphics[width=1.15\figurewidthC]{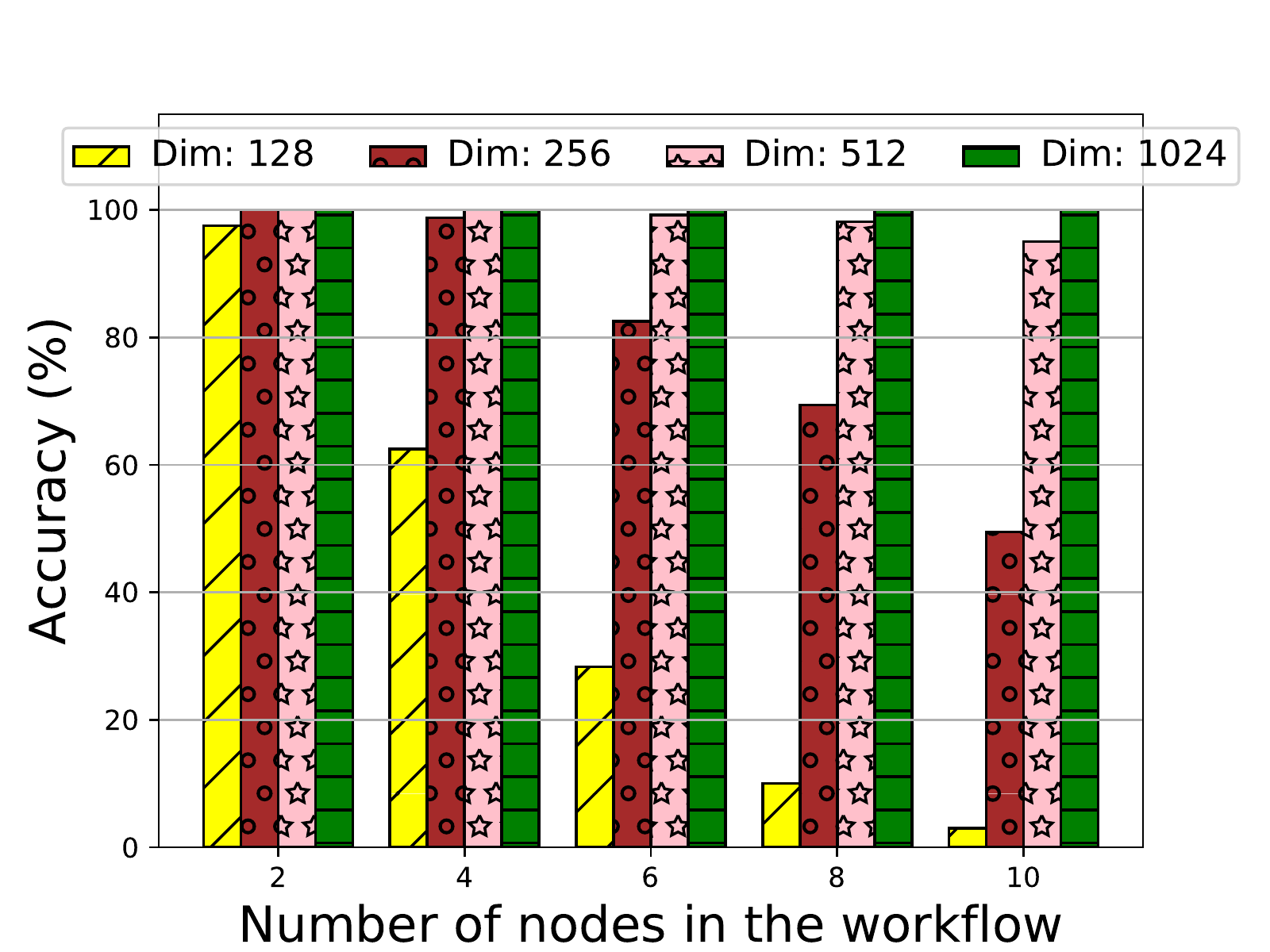}} \hspace{0.05in}
\subfloat[Experiment 3: Random walk, Citation Network \normalsize]{\includegraphics[width=1.15\figurewidthC]{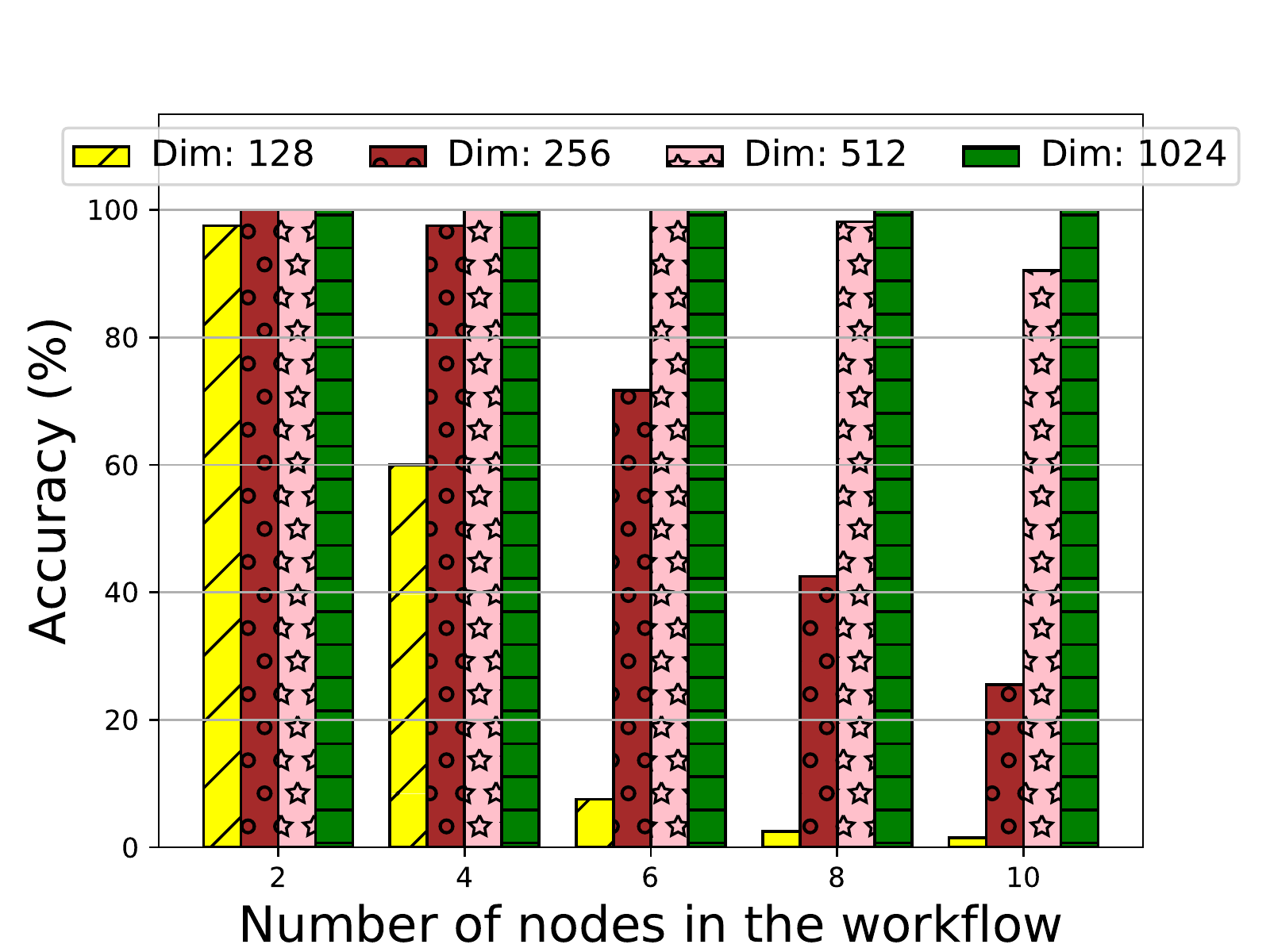}} 
\caption{ Accuracy of decoding a node sequence vector (Dim: dimension of the vectors).}
\label{fig:wfacc}
\end{figure*}

From Figure~\ref{fig:wfacc} (a), we make the following observations. First, when the node sequence length
is small, all vector dimensions lead to almost $100\%$ decoding accuracy. Second, as node sequence length increases, the decoding accuracy declines sharply especially in cases where the node vector dimensions are relatively small, i.e., $128$ and $256$. 
This is because, 
by Theorem~\ref{thm:cond}, 
correlations between the involved node vectors cause inevitable errors.
Such error accumulates when the length of the node sequence is large. 
Nevertheless, with sufficiently large node vector dimension, i.e., $1024$, even long sequences can be decoded perfectly, \textcolor{scolor}{and with $512$, we can decode a workflow of length 10 with over $90\%$ accuracy. 
} 
Interestingly, Figures~\ref{fig:wfacc} (b) and (c) also show similar trends. This is significant, as it shows that \textit{even if node sequences are constructed from correlated node vectors, i.e., picked from the same graph neighborhood, the decoding still achieves high accuracy}. 
This is because, as shown in Theorem~\ref{thm:cond}, after cyclic shifting, the resulting vectors are orthogonal with high probability when the vector dimension is large
(even if these vectors are originally non-orthogonal). 
Finally, in Figures~\ref{fig:wfacc} (a) and (b), the decoding algorithm needs to find the best match from among $4,592$ nodes (Wikispeedia network). In contrast, in Figure~\ref{fig:wfacc} (c), it is much more challenging to find the match among $27,770$ nodes. \textcolor{scolor}{Yet, we are able to decode with high accuracy with theoretical guarantees.}

\section{Conclusion} 
\label{sec:conc}
We presented \sysname that learns semantically enriched vector representations of graph node sequences. To achieve this, we first developed \sysname-S that learns single node embeddings via a multi-task learning formulation that jointly learns the co-occurrence probabilities of nodes within a graph and words within a node-associated document. We evaluated \sysname-S against state-of-the-art approaches that leverage both graph and text inputs and showed that \sysname-S improves multi-label classification accuracy in Wikispeedia dataset by up to $50\%$ and link prediction over Physics Citation network by up to $78\%$. We then developed \sysname that is able to employ theoretically provable schemes for vector composition to represent node sequences using the same dimension as the individual node vectors from \sysname-S. We demonstrated that the individual nodes within the sequence can be inferred with a high accuracy (close to $100\%$) from such composite \sysname vectors. \looseness=-1

\section*{Acknowledgment}

This research was sponsored by the U.S. Army Research Laboratory and the U.K. Ministry of Defence under Agreement Number W911NF-16-3-0001. The views and conclusions contained in this document are those of the authors and should not be interpreted as representing the official policies, either expressed or implied, of the U.S. Army Research Laboratory, the U.S. Government, the U.K. Ministry of Defence or the U.K. Government. The U.S. and U.K. Governments are authorized to reproduce and distribute reprints for Government purposes notwithstanding any copyright notation hereon. Author Swati Rallapalli is currently employed with Facebook, Inc.\looseness=-1



\bibliographystyle{IEEEtran}
\bibliography{refs}

\begin{thebibliography}{10}
\providecommand{\url}[1]{#1}
\csname url@samestyle\endcsname
\providecommand{\newblock}{\relax}
\providecommand{\bibinfo}[2]{#2}
\providecommand{\BIBentrySTDinterwordspacing}{\spaceskip=0pt\relax}
\providecommand{\BIBentryALTinterwordstretchfactor}{4}
\providecommand{\BIBentryALTinterwordspacing}{\spaceskip=\fontdimen2\font plus
\BIBentryALTinterwordstretchfactor\fontdimen3\font minus
  \fontdimen4\font\relax}
\providecommand{\BIBforeignlanguage}[2]{{%
\expandafter\ifx\csname l@#1\endcsname\relax
\typeout{** WARNING: IEEEtran.bst: No hyphenation pattern has been}%
\typeout{** loaded for the language `#1'. Using the pattern for}%
\typeout{** the default language instead.}%
\else
\language=\csname l@#1\endcsname
\fi
#2}}
\providecommand{\BIBdecl}{\relax}
\BIBdecl

\bibitem{Ma13ICDCS}
L.~Ma, T.~He, K.~Leung, D.~Towsley, and A.~Swami, ``Efficient identification of
  additive link metrics via network tomography,'' in \emph{IEEE ICDCS}, 2013.

\bibitem{deepwalk}
B.~Perozzi, R.~Al-Rfou, and S.~Skiena, ``Deepwalk: Online learning of social
  representations,'' in \emph{ACM KDD}, 2014.

\bibitem{node2vec}
A.~Grover and J.~Leskovec, ``Node2vec: Scalable feature learning for
  networks,'' in \emph{ACM KDD}, 2016.

\bibitem{line}
J.~Tang, M.~Qu, M.~Wang, M.~Zhang, J.~Yan, and Q.~Mei, ``Line: Large-scale
  information network embedding,'' in \emph{WWW}, 2015.

\bibitem{paragraph2vec}
Q.~Le and T.~Mikolov, ``Distributed representations of sentences and
  documents,'' in \emph{ICML}, 2014.

\bibitem{word2vec}
T.~Mikolov, K.~Chen, G.~Corrado, and J.~Dean, ``Efficient estimation of word
  representations in vector space,'' \emph{CoRR}, vol. abs/1301.3781, 2013.

\bibitem{mpme}
Y.~Cao, L.~Huang, H.~Ji, X.~Chen, and J.~Li, ``Bridge text and knowledge by
  learning multi-prototype entity mention embedding,'' in \emph{ACL}, 2017.

\bibitem{Xiao17AAAI}
H.~Xiao, M.~Huang, L.~Meng, and X.~Zhu, ``{SSP}: Semantic space projection for
  knowledge graph embedding with text descriptions,'' in \emph{AAAI}, 2017.

\bibitem{Le14ICDM}
T.~M.~V. Le and H.~W. Lauw, ``Probabilistic latent document network
  embedding,'' in \emph{ICDM}, 2014.

\bibitem{SNE}
L.~Liao, X.~He, H.~Zhang, and T.~Chua, ``Attributed social network embedding,''
  \emph{CoRR}, vol. abs/1705.04969, 2017.

\bibitem{SNEA}
S.~Wang, C.~Aggarwal, J.~Tang, and H.~Liu, ``Attributed signed network
  embedding,'' in \emph{ACM CIKM}, 2017.

\bibitem{gatedGNN}
Y.~Li, D.~Tarlow, M.~Brockschmidt, and R.~S. Zemel, ``Gated graph sequence
  neural networks,'' in \emph{ICLR}, 2016.

\bibitem{EP}
A.~Garcia~Duran and M.~Niepert, ``Learning graph representations with embedding
  propagation,'' in \emph{NIPS}, 2017.

\bibitem{Wikispeedia}
{Wikispeedia}, ``{Wikispeedia},''
  \url{https://snap.stanford.edu/data/wikispeedia.html}, 2009.

\bibitem{citation_data}
J.~Leskovec and A.~Krevl, ``{SNAP Datasets}: {Stanford} large network dataset
  collection,'' \url{https://snap.stanford.edu/data/cit-HepTh.html}, 2014.

\bibitem{wordspace}
H.~Sch\"{u}tze, ``Word space,'' in \emph{NIPS}, 1993.

\bibitem{nnlm}
Y.~Bengio, R.~Ducharme, P.~Vincent, and C.~Janvin, ``A neural probabilistic
  language model,'' \emph{J. Mach. Learn. Res.}, vol.~3, pp. 1137--1155, 2003.

\bibitem{nnlmplus}
T.~Mikolov, J.~Kopecky, L.~Burget, O.~Glembek, and J.~Cernocky, ``Neural
  network based language models for highly inflective languages,'' in
  \emph{IEEE ICASSP}, 2009.

\bibitem{ngram1}
S.~Katz, ``Estimation of probabilities from sparse data for the language model
  component of a speech recognizer,'' \emph{IEEE Transactions on Acoustics,
  Speech, and Signal Processing}, vol.~35, no.~3, pp. 400--401, 1987.

\bibitem{ngram2}
F.~Jelinek and R.~L. Mercer, ``Interpolated estimation of {M}arkov source
  parameters from sparse data,'' in \emph{Proceedings, Workshop on Pattern
  Recognition in Practice}.\hskip 1em plus 0.5em minus 0.4em\relax North
  Holland, 1980, pp. 381--397, 401.

\bibitem{Pimentel17ICLR}
T.~Pimentel, A.~Veloso, and N.~Ziviani, ``Unsupervised and scalable algorithm
  for learning node representations,'' in \emph{ICLR}, 2017.

\bibitem{Ribeiro17KDD}
L.~F. Ribeiro, P.~H. Saverese, and D.~R. Figueiredo, ``Struc2vec: Learning node
  representations from structural identity,'' in \emph{ACM KDD}, 2017.

\bibitem{Wang16KDD}
D.~Wang, P.~Cui, and W.~Zhu, ``Structural deep network embedding,'' in
  \emph{ACM KDD}, 2016.

\bibitem{Liu16IJCAI}
L.~Liu, W.~K. Cheung, X.~Li, and L.~Liao, ``Aligning users across social
  networks using network embedding,'' in \emph{IJCAI}, 2016.

\bibitem{Zhou17AAAI}
C.~Zhou, Y.~Liu, X.~Liu, Z.~Liu, and J.~Gao, ``Scalable graph embedding for
  asymmetric proximity,'' in \emph{AAAI}, 2017.

\bibitem{subgraph2vec}
A.~Narayanan, M.~Chandramohan, L.~Chen, Y.~Liu, and S.~Saminathan,
  ``Subgraph2vec: Learning distributed representations of rooted sub-graphs
  from large graphs,'' \emph{CoRR}, vol. abs/1606.08928, 2016.

\bibitem{decentralized_wf}
C.~Simpkin, I.~Taylor, G.~Bent, G.~De~Mel, and R.~Ganti, ``A scalable vector
  symbolic architecture approach for decentralized workflows,'' in
  \emph{COLLA}, 2018.

\bibitem{Guo17TKDE}
S.~Guo, Q.~Wang, B.~Wang, L.~Wang, and L.~Guo, ``{SSE}: Semantically smooth
  embedding for knowledge graphs,'' \emph{IEEE Transactions on Knowledge and
  Data Engineering}, vol.~29, no.~4, pp. 884--897, 2017.

\bibitem{Xie16IJCAI}
R.~Xie, Z.~Liu, and M.~Sun, ``Representation learning of knowledge graphs with
  hierarchical types,'' in \emph{IJCAI}, 2016.

\bibitem{LANE}
X.~Huang, J.~Li, and X.~Hu, ``Label informed attributed network embedding,'' in
  \emph{ACM WSDM}, 2017.

\bibitem{Niepert16ICML}
M.~Niepert, M.~Ahmed, and K.~Kutzkov, ``Learning convolutional neural networks
  for graphs,'' in \emph{ICML}, 2016.

\bibitem{Yao17AAAI}
L.~Yao, Y.~Zhang, B.~Wei, Z.~Jin, R.~Zhang, Y.~Zhang, and Q.~Chen,
  ``Incorporating knowledge graph embeddings into topic modeling,'' in
  \emph{AAAI}, 2017.

\bibitem{Wang16IJCAI}
Z.~Wang and J.~Li, ``Text-enhanced representation learning for knowledge
  graph,'' in \emph{IJCAI}, 2016.

\bibitem{GraphSAGE}
W.~Hamilton, Z.~Ying, and J.~Leskovec, ``Inductive representation learning on
  large graphs,'' in \emph{NIPS}, 2017.

\bibitem{planetoid}
Z.~Yang, W.~W. Cohen, and R.~Salakhutdinov, ``Revisiting semi-supervised
  learning with graph embeddings,'' in \emph{ICML}, 2016.

\bibitem{metapath2vec}
Y.~Dong, N.~V. Chawla, and A.~Swami, ``Metapath2vec: Scalable representation
  learning for heterogeneous networks,'' in \emph{ACM KDD}, 2017.

\bibitem{semisuper_GCN}
T.~N. Kipf and M.~Welling, ``Semi-supervised classification with graph
  convolutional networks,'' in \emph{ICLR}, 2017.

\bibitem{tadw}
\BIBentryALTinterwordspacing
C.~Yang, Z.~Liu, D.~Zhao, M.~Sun, and E.~Y. Chang, ``Network representation
  learning with rich text information,'' in \emph{Proceedings of the 24th
  International Conference on Artificial Intelligence}, ser. IJCAI'15.\hskip
  1em plus 0.5em minus 0.4em\relax AAAI Press, 2015, pp. 2111--2117. [Online].
  Available: \url{http://dl.acm.org/citation.cfm?id=2832415.2832542}
\BIBentrySTDinterwordspacing

\bibitem{hsca}
D.~Zhang, J.~Yin, X.~Zhu, and C.~Zhang, ``Homophily, structure, and content
  augmented network representation learning,'' \emph{2016 IEEE 16th
  International Conference on Data Mining (ICDM)}, pp. 609--618, 2016.

\bibitem{aane}
X.~Huang, J.~Li, and X.~Hu, ``Accelerated attributed network embedding,'' in
  \emph{SDM}, 2017.

\bibitem{anrl}
Z.~Zhang, H.~Yang, J.~Bu, S.~Zhou, P.~Yu, J.~Zhang, M.~Ester, and C.~Wang,
  ``Anrl: Attributed network representation learning via deep neural
  networks,'' in \emph{IJCAI}, 2018.

\bibitem{Gutmann12}
M.~U. Gutmann and A.~Hyv\"{a}rinen, ``Noise-contrastive estimation of
  unnormalized statistical models, with applications to natural image
  statistics,'' \emph{J. Mach. Learn. Res.}, vol.~13, no.~1, pp. 307--361,
  2012.

\bibitem{rntn}
R.~Socher, A.~Perelygin, J.~Wu, J.~Chuang, C.~D. Manning, A.~Ng, and C.~Potts,
  ``Recursive deep models for semantic compositionality over a sentiment
  treebank,'' in \emph{EMNLP}, 2013.

\end{thebibliography}

\end{document}